\newcommand{\cmark}{\textcolor{green!50!black}{\ding{51}}}%
\newcommand{\xmark}{\textcolor{red}{\ding{55}}}%
\newcommand{\horizon}{T}
\definecolor{mygreen}{rgb}{0.0, 0.5, 0.0}
\begin{document}
\setlength{\abovedisplayskip}{3pt}
\setlength{\belowdisplayskip}{3pt}

%

%

\twocolumn[

\aistatstitle{A Tale of Sampling and Estimation in Discounted Reinforcement Learning}

\aistatsauthor{ Alberto Maria Metelli \And Mirco Mutti \And  Marcello Restelli }

\aistatsaddress{ Politecnico di Milano \And  Politecnico di Milano \And Politecnico di Milano } ]

\begin{abstract}
The most relevant problems in discounted reinforcement learning involve estimating the mean of a function under the stationary distribution of a Markov reward process, such as the expected return in \emph{policy evaluation}, or the policy gradient in \emph{policy optimization}.
In practice, these estimates are produced through a finite-horizon episodic sampling, which neglects the mixing properties of the Markov process.
It is mostly unclear how this mismatch between the practical and the ideal setting affects the estimation, and the literature lacks a formal study on the pitfalls of episodic sampling, and how to do it optimally.
In this paper, we present a minimax lower bound on the discounted mean estimation problem that explicitly connects the estimation error with the mixing properties of the Markov process and the discount factor.
Then, we provide a statistical analysis on a set of notable estimators and the corresponding sampling procedures, which includes the finite-horizon estimators often used in practice.
Crucially, we show that estimating the mean by directly sampling from the discounted kernel of the Markov process brings compelling statistical properties \wrt the alternative estimators, as it matches the lower bound without requiring a careful tuning of the episode horizon.
\end{abstract}

\section{INTRODUCTION}
The discounted formulation of the Markov Decision Process~\citep[MDP,][]{puterman2014markov}, initially studied in~\citep{blackwell1962discrete,bellman1966dynamic}, established itself as one of the most popular models for Reinforcement Learning~\citep[RL,][]{sutton2018reinforcement} due to its favorable theoretical tractability and its link with temporal difference learning~\citep{sutton1988learning}, a key ingredient behind several successful algorithms~\citep[\eg][]{watkins1992q, mnih2015human, lillicrap2016continuous, silver2016mastering}.
On a technical level, discounted RL problems are based on the estimation of \quotes{exponentially discounted} quantities over an infinite horizon. Specifically, \emph{policy evaluation} requires estimating the $\gamma$-discounted value function $V^\mu_\gamma$ of a policy $\mu$:
\begin{equation}
	V^\mu_\gamma (s) = \E_{\mu} \left[ \sum_{t = 0}^{+\infty} \gamma^t R(s_t, a_t) \ \bigg| \ s_0 = s \right],
	\label{eq:value_function}
\end{equation}
whereas \emph{policy optimization} involves the estimation of the policy gradient~\citep{sutton1999policy}:
\begin{equation}
	\nabla_{\mu} V^\mu_\gamma = \E_{\mu} \left[ \sum_{t = 0}^{+\infty} \gamma^t \nabla_\mu \log \mu (a_t | s_t) Q_\gamma^\mu (s_t, a_t)  \right].
	\label{eq:policy_gradient}
\end{equation}
One can equivalently write those quantities as expectations over the state-action space $\E_{(s, a) \sim \pi^\mu_\gamma} [ f(s, a) ]$, where $\pi^\mu_\gamma(s, a) \coloneqq (1-\gamma) \E_{\mu} [\sum_{t = 0}^{+\infty} \gamma^t \indic \{s_t = s, a_t = a\}]$ is the \emph{$\gamma$-discounted state-action distribution} induced by policy $\mu$. The latter can be seen as the stationary distribution of a suitably defined Markov Chain~\citep[MC,][]{levin2017markov} obtained from the original MDP by fixing the policy $\mu$ and considering, at any step, a reset probability $1 - \gamma$ of returning to the initial state. This means that discounted RL is rooted in the mean estimation of a function in an MC. Nonetheless, the latter technical problem has received little attention in the discounted RL literature, which has mostly focused on the pitfalls of common practices~\citep[\eg][]{thomas2014bias, lehnert2018value, nota2020policy, tang2021taylor, zhang2022deeper} and their impact on the learning problem~\citep{jiang2016structural, van2019using, amit2020discount, guo2022theoretical}. Instead, how to optimally \emph{collect samples} from the MC and how to appropriately perform the \emph{mean estimation} remain mostly obscure.

This paper formally studies the $\gamma$-discounted mean estimation in MCs. First, we provide a general formulation of the problem by defining an estimation algorithm as a pairing of a \emph{reset policy}, which is used to make decisions on whether to reset the chain (\ie re-start from the initial state) at a given step, and an actual \emph{estimator}, from which the estimate is computed on the collected samples. For this notion of estimation algorithm, we introduce a PAC requirement that guarantees a small estimation error with high probability when enough samples are collected. Most importantly, we derive a \emph{lower bound} on the number of samples required by any estimation algorithm to meet the proposed PAC requirement, which relates the sample complexity to the discount factor $\gamma$ and the mixing properties of the chain.

Having established the statistical barriers of the problem, we shift our focus toward the properties of practical estimation algorithms. The most common practice in discounted RL is to compute the quantities in Equations~(\ref{eq:value_function},~\ref{eq:policy_gradient}) through a \emph{finite-horizon} algorithm, \ie that resets the chain every $\horizon$ steps. This approach is known to suffer from a meaningful \emph{bias}~\citep{thomas2014bias}, which can be only partially mitigated with a careful choice of $\horizon$ and correcting factors (though they are seldom used in practice). Alternatively, one can design an \emph{unbiased} estimation algorithm that, at every step, rejects the collected sample with probability $\gamma$, otherwise it accepts the sample and resets the chain. However, this \emph{one-sample} estimator has been mostly used as a theoretical tool~\cite[\eg][]{thomas2014bias, metelli2021safe}, since it wastes a large portion of the samples and, thus, suffers from a large variance. Another option is to reset the chain like the \emph{one-sample} estimator, but to compute the estimate over all the collected samples, like the \emph{finite-horizon} estimators. The latter \emph{all-samples} approach introduces some bias by bringing dependent samples, but it mitigates the variance through a greater effective sample size. For the mentioned estimation algorithms, we study their computational properties, derive concentration inequalities, and certify the \emph{all-samples} approach, which practitioners almost neglect, is the one with the best statistical profile, as it results nearly minimax optimal. 

\textbf{Original Contributions}~~~In summary, we contribute:
\begin{itemize}[noitemsep,topsep=-2pt,parsep=0pt,partopsep=0pt,leftmargin=*]
	\item A formal definition of the problem of $\gamma$-discounted mean estimation in Markov chains and its corresponding PAC requirement (Section~\ref{sec:3});
	\item The first minimax lower bound of order $\widetilde{\Omega}( 1 / \sqrt{ N (1 - \beta\gamma) } )$ on the error of $\gamma$-discounted mean estimation in MCs, where $N$ is the number of collected samples and $1 - \beta$ is the \emph{absolute spectral gap}~\citep{levin2017markov} of the chain (Section~\ref{sec:minimax_lower_bound});
	\item The analysis of the statistical properties of a family of estimation algorithms that includes the \emph{finite-horizon}, \emph{one-sample}, \emph{all-samples} types (Section~\ref{sec:estimators}), in which the \emph{all-samples} approach results in the best statistical profile;
	\item An empirical evaluation of the mentioned estimation algorithms over simple yet illustrative problems, which uphold the compelling statistical properties of the \emph{all-samples} estimator (Section~\ref{sec:validation}).
\end{itemize}

Finally, this paper aims to shed light on the statistical barriers of $\gamma$-discounted mean estimation in MCs, which stands as the technical bedrock of the discounted RL formulation. As a by-product of this theoretical analysis, the \emph{all-samples} estimation approach emerges as an interesting opportunity for the development of novel practical algorithms for discounted RL supported by compelling statistical properties.

\setlength{\textfloatsep}{8pt}

\section{PRELIMINARIES}
In this section, we introduce the necessary background that will be employed in the subsequent sections of the paper. 

\textbf{Notation}~~Let $\Xs$ be a set and $\mathfrak{F}$ a $\sigma$-algebra on $\Xs$. We denote with $\PM{\Xs}$ the set of probability measures over $(\Xs, \mathfrak{F})$. We denote with $\MF{\Xs}$ the set of $\mathfrak{F}$-measurable real-valued functions. Let $\nu \in \PM{\Xs}$ and $f \in \MF{\Xs}$, with little abuse of notation, we denote with $\nu: \MF{\Xs} \rightarrow \Reals$ the expectation operator $\nu f = \int_{\Xs} f(x) \nu(\de x)$. For $p \in [1,\infty)$, we define the $L_p(\pi)$-norm as $\left\|f \right\|_{\pi,p}^p = \int_{\Xs} |f(x)|^p \pi(\de x)$. Let $T :\MF{\Xs} \rightarrow \MF{\Xs} $ be a linear operator, we define the operator norm as $\left\| T  \right\|_{\pi, p \rightarrow q} = \sup_{\left\| f \right\|_{\pi,p} \le 1 }  \| T f \|_{\pi,q}$, for $p,q \in[1,\infty)$. Let $\nu,\mu \in \PM{\Xs}$, the \emph{chi-square divergence} is defined as $\chi_2(\nu\|\mu) = \left\| (\mathrm{d} \nu / \mathrm{d} \mu -1)^2 \right\|_{\mu}^2$. For $a,b \in \Nat$ with $a \le b$ we employ the notation $\ldsquare a,b \rdsquare = \{a,\dots,b\}$.
%
%

\textbf{Markov Chains}~~
A Markov kernel is an $\mathfrak{F}$-measurable function $P: \Xs \rightarrow \PM{\Xs}$ mapping every state $x \in \Xs$ to a probability measure $P(\cdot|x) \in \PM{\Xs}$. We denote with $\PM{\Xs,\Xs}$ the set of Markov kernels over $(\Xs,\mathfrak{F})$. With little abuse of notation, we denote with the same symbol the operator $P: \MF{\Xs} \rightarrow \MF{\Xs}$ defined as $ (P f)(x) = \int_{\Xs}  f(y) P(\de y|x)$ for $x \in \Xs$. A probability measure $\pi \in \PM{\Xs}$ is \emph{invariant} \wrt $P$ if $\pi = \pi P$. Let $\Pi = \mathbf{1} \pi$, we define the \emph{absolute $L_2$-spectral gap}~\citep{levin2017markov} as $1-\beta$, where $\beta = \left\| P - \Pi \right\|_{\pi,2\rightarrow 2}$.

\textbf{Discounted Sampling}~~Let $\gamma \in [0,1]$ be a discount factor, $P \in \PM{\Xs,\Xs}$ be a Markov kernel, and $\nu \in \PM{\Xs}$ be an initial-state distribution, the \emph{$\gamma$-discounted stationary distribution} $\pi_\gamma \in \PM{\Xs}$ is defined in several equivalent forms:
\begin{equation}\label{eq:gammaDiscDist}
\begin{aligned}
	\pi_\gamma 
	&= (1-\gamma) \sum_{t \in \Nat} \gamma^t \nu P^t \\ 
	&= (1-\gamma) \nu\left( I - \gamma P\right)^{-1} = (1-\gamma) \nu + \gamma \pi_\gamma P.
\end{aligned}
\end{equation}
$\pi_\gamma$ represents normalized expected count of the times each state is visited, where a visit at time $t \in \Nat$ counts $\gamma^t$.
If $\gamma < 1$, $\pi_\gamma$ is guaranteed to exist. When $\gamma=1$, we denote with $\pi= \lim_{\gamma\rightarrow 1} \pi_\gamma$ the \emph{stationary distribution} of $P$, if it exists. It is well-known that $\pi_\gamma$ is also the stationary distribution of the MC with kernel $P_\gamma = (1-\gamma) \mathbf{1} \nu + \gamma P$.

\section{$\gamma$-DISCOUNTED MEAN ESTIMATION}\label{sec:3}
In this section, we formally define the problem of $\gamma$-discounted mean estimation in Markov chains. Then, we introduce a general framework for characterizing a broad class of estimators (Section~\ref{sec:resetAlg}), and we formally define the PAC requirement to asses their quality (Section~\ref{sec:PAC}).

\begin{figure*}[t]
\centering
\includegraphics[width=.85\textwidth]{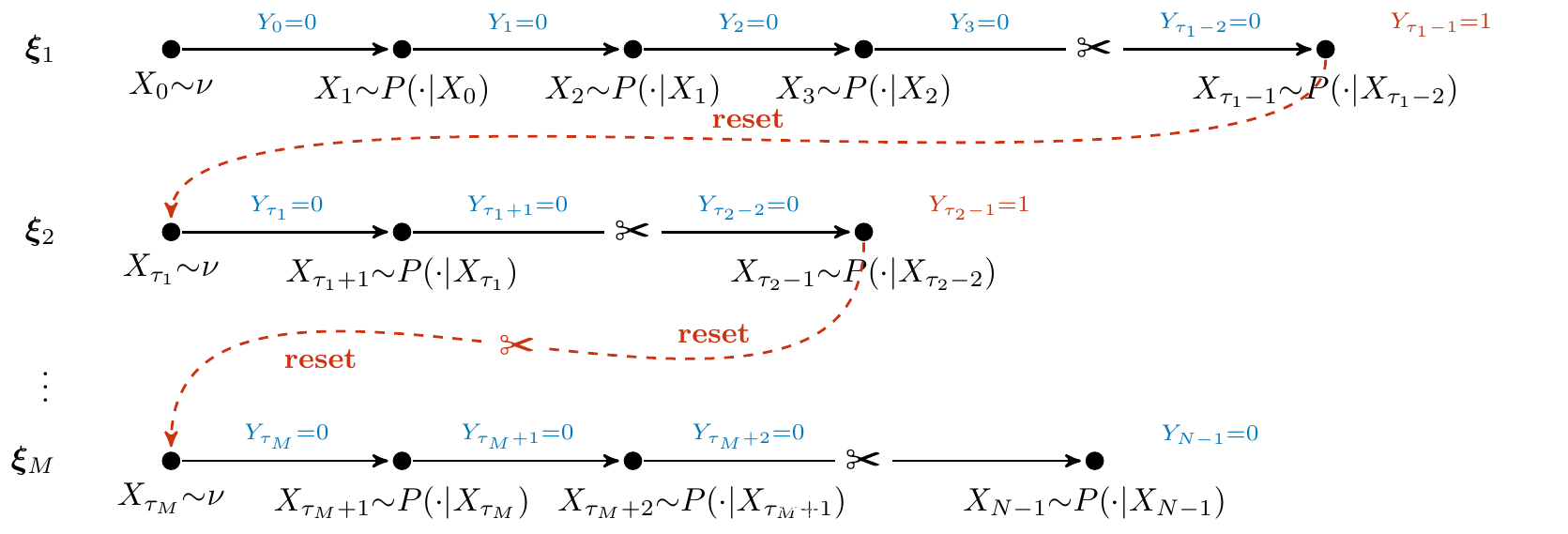}
\caption{Graphical representation of the sampling process in a Markov chain with a reset policy.}\label{fig:repr}
\end{figure*}

Let $\nu \in \PM{\Xs}$ be an initial-state distribution and $P \in \PM{\Xs,\Xs}$ be a Markov kernel. Given a discount factor $\gamma \in [0,1]$ and a measurable function $f \in \MF{\Xs}$, our goal consists in estimating:
\begin{align}
\pi_\gamma f \coloneqq \E_{X \sim \pi_\gamma}[f(X)]  = \int_{\Xs} f(x) \pi_\gamma(\de x),
\end{align} 
\ie the expectation of $f$ under the $\gamma$-discounted stationary distribution $\pi_\gamma$ induced by $\nu$ and $P$, as defined in Equation~\eqref{eq:gammaDiscDist}. Furthermore, we introduce the quantity:\footnote{$\pi_\gamma : \MF{\Xs} \rightarrow \Reals$ and $\sigma_\gamma^2 : \MF{\Xs} \rightarrow \Reals[0]$ act as operators.}
\begin{align*}
	\sigma_\gamma^2 f \coloneqq \Var_{X \sim \pi_\gamma}[f(X)] = \int_{\Xs} (f(x) - \pi_\gamma f)^2 \pi_\gamma(\de x),
\end{align*}
\ie the variance of $f$ under distribution $\pi_\gamma$. When $\gamma = 1$, we denote with $\pi f$ and $\sigma^2 f$ the expectation and variance of $f$ under the stationary distribution, when they exist.

\begin{algorithm}[t]
	\caption{Markov chain sampling with reset policy.}\label{alg:sampling}
	{\small
	\textbf{Input}: Markov kernel $P$, initial-state distribution $ \nu$, discount factor $\gamma$, reset policy $\vrho = (\rho_t)_{t\in \Nat}$, number of samples $N$\\
	\textbf{Output}: dataset ${H}_{N}$
	\begin{algorithmic}
		\STATE ${H}_0 = ()$, $X_0 \sim \nu$
		\FOR{$t \in \dsb{0,N-1}$}
			\STATE $Y_t \sim \rho_t(\cdot|\mathcal{H}_t,X_t)$
			\STATE ${H}_{t+1} = {H}_t \oplus ((X_t,Y_t))$  $\qquad\quad \oplus$ denotes concatenation
			\IF{$Y_t = 0$}
				\STATE $X_{t+1} \sim P(\cdot|X_t)$
			\ELSE
				\STATE $X_{t+1} \sim \nu$
			\ENDIF
		\ENDFOR
		\RETURN ${H}_N$
	\end{algorithmic}}
\end{algorithm}

\subsection{Reset-based Estimation Algorithms}\label{sec:resetAlg}
In order to perform a reliable estimation of $\pi_\gamma f $, it is advisable to have the possibility of \quotes{resetting} the chain, \ie to interrupt the natural evolution of the chain based on the Markov kernel $P$, and restart the simulation from a state sampled from the initial-state distribution $\nu$. 
For these reasons, we introduce the notion of \emph{reset policy}, \ie a device that decides whether to reset the chain, based on time, the current history of observed states and reset decisions.

\begin{defi}[Reset Policy]
A \emph{reset policy} is a sequence $\vrho = (\rho_t)_{t \in \Nat}$ of functions $\rho_t : \Hs_{t} \times \Xs \rightarrow \PM{\{0,1\}}$ mapping for every $t \in \Nat$ a  history of past states and resets $H_{t} = (X_0,Y_0,\dots,X_{t-1},Y_{t-1}) \in \Hs_{t} = (\Xs \times \{0,1\})^{t}$ and the current state $X_t \in \Xs$, to a probability measure  $\rho_t(\cdot|H_{t},X_t) \in \PM{\{0,1\}}$.
\end{defi}

Thus, at every time instant $t \in \Nat$, based on $H_{t}\in \Hs_t$ and $X_t \in \Xs$, we sample the reset decision $Y_t \in \{0,1\}$ from the current reset policy $\rho_t(\cdot|H_{t},X_t)$. If $Y_t =1$, then we reset, \ie the next state $X_{t+1} $ is sampled from the initial state distribution $\nu$, whereas if $Y_t =0$ the chain evolution proceeds and $X_{t+1}$ is sampled from the Markov kernel $P(\cdot|X_t)$. The resulting sampling algorithm is reported in Algorithm~\ref{alg:sampling}.

\textbf{Resettable and Resetted Processes}~~Given a reset policy  $\vrho = (\rho_t)_{t \in \Nat}$, we can represent the distribution of the next state with the \emph{resettable process}, defined through the resettable Markov kernel $P_{\nu}: \Xs \times \{0,1\} \rightarrow \PM{\Xs}$, defined for every $(X,Y) \in \Xs \times \{0,1\}$ and measurable set $\mathcal{B} \in \mathfrak{F}$ as:
\begin{align}
	P_{\nu}(\mathcal{B}|X,Y) = Y \cdot\nu(\mathcal{B}) + (1-Y) \cdot P(\mathcal{B}|X).
\end{align}
Suppose we run the process for $N \in \Nat$ steps, the product measure generating the history $H_N$ is given by $P^N_{\nu,\vrho} = \nu \otimes \rho_0 \otimes ( \bigotimes_{t=1}^{N-1}  P_{\nu} \otimes \rho_t ) $. 
The sequence of states resulting from applying a reset policy $\vrho$ is a non-stationary non-Markovian process, called \emph{resetted process}, whose kernel $P_{\nu,\rho_t,t}: \Hs_{t} \times \Xs \rightarrow \PM{\Xs}$ is defined for $t \in \Nat$, history $H \in \Hs_{t}$, state $X \in \Xs$, and measurable set $\mathcal{B} \in \mathfrak{F}$ as:\footnote{If $\rho_t$ is stationary and/or Markovian then  $P_{\nu,\rho_t,t}$ is stationary and/or Markovian.}
\begin{equation}
\begin{aligned}
	P_{\nu,\rho_t,t}(\mathcal{B}|X,H) &= \rho_t(\{1\}|H,X) \cdot \nu(\mathcal{B})  \\
	&\quad + \rho_t(\{0\}|H,X) \cdot P(\mathcal{B}|X).
\end{aligned}
\end{equation}

\textbf{Trajectories and Horizons}~~
%
%
We define a \emph{trajectory} as the sequence of states observed between two consecutive resets. The number of trajectories $M$ can be computed in terms of the resets, \ie $M = 1 + \sum_{t=0}^{N-1} Y_i$. We introduce the time instants $\tau_i$ in which a reset is performed as:
{\thinmuskip=1mu
\medmuskip=1mu
\thickmuskip=1mu
\begin{align}
\tau_i = \begin{cases}
	0 & \text{if } i = 1 \\
	 1 + \min\{ t \in \Nat \,:\, t \ge \tau_{i-1}  \wedge Y_t = 1\} & \text{if } i \in \dsb{2,M} \\
	 N & \text{if } i = M+1
\end{cases}.
\end{align}
}%
Therefore, for every $i \in \dsb{M}$, a trajectory is given by $\vxi_i = (X_{\tau_{i}}, \dots, X_{\tau_{i+1}-1})$, whose horizon is computed as $\horizon_i = \tau_{i+1} - \tau_{i}$. A graphical representation of the resulting sampling process is provided in Figure~\ref{fig:repr}.

%
%
%
%


\subsection{Estimators and PAC Requirement}\label{sec:PAC}
In addition to the reset policy $\vrho$, to actually define an estimation algorithm, we need an \emph{estimator}, \ie a function $\widehat{\eta} : \Hs_N \times \MF{\Xs} \rightarrow \Reals$ that maps a history of observations $H_N \in \Hs_N$ and a measurable function $f \in \MF{\Xs}$ to a real number $\widehat{\eta}(H_N, f) \in \Reals$. Thus, an \emph{estimation algorithm} is a pair $\mathfrak{A} = (\vrho, \widehat{\eta})$. We now introduce the PAC requirement to assess the quality of an estimation algorithm $\mathfrak{A} $.

\begin{defi}[$(\epsilon,\delta,N)$-PAC] Let $\gamma \in [0,1]$ be a discount factor, let $P \in \PM{\Xs, \Xs}$ be a Markov kernel, let $\nu \in \PM{\Xs}$ be an initial-state distribution, and let $f \in \MF{\Xs}$ be a measurable function. An estimation algorithm $\mathfrak{A} = (\textcolor{black}{\bm{\rho}},\textcolor{black}{\widehat{\eta}})$ for the $\gamma$-discounted mean $\pi_\gamma f$, is $(\epsilon,\delta,N)$-PAC if with probability at least $1 - \delta$ it holds that: 
\begin{equation*}
	\left| \textcolor{black}{\widehat{\eta}}(H_{N},f) - \pi_\gamma f \right| < \epsilon,
\end{equation*}
where $H_N \sim P^N_{\nu,\textcolor{black}{\vrho}}$ is collected with the reset policy $\textcolor{black}{\vrho}$.
\end{defi}

In the next sections, we first dive into the study of the intrinsic complexity of estimating $\pi_\gamma f$ (Section~\ref{sec:minimax_lower_bound}) and, then, we present a handful of practical estimators along with their computational and statistical properties (Section~\ref{sec:estimators}).

\section{MINIMAX LOWER BOUND FOR $\gamma$--DISCOUNTED MEAN ESTIMATION}\label{sec:minimax_lower_bound}
In this section, we prove the first minimax lower bound for the problem of $\gamma$--discounted mean estimation in MCs. We first state a lower bound for a general estimation algorithm $\mathfrak{A} = (\vrho, \widehat{\eta})$. Then, we report a brief sketch of the proof, which includes how to construct the hard instance, while a complete derivation can be found in Appendix~\ref{apx:proofs_minimax}.

\begin{restatable}[Minimax Lower Bound]{thr}{minimax}\label{thr:minimax}
For every discount factor $\gamma \in [0,1]$, sufficiently small confidence $\delta$,\footnote{The explicit regime for $\delta$ is reported in the proof sketch.} number of interactions $N \in \Nat$, and $(\epsilon,\delta,N)$-PAC estimation algorithm $\mathfrak{A} = (\textcolor{black}{\bm{\rho}},\textcolor{black}{\widehat{\eta}})$, there exists a class of Markov kernels $P \in \PM{\Xs,\Xs}$ with absolute spectral gap $1-\beta \in (0,1]$, initial-state distributions $\nu\in \PM{\Xs}$, measurable function $f \in \MF{\Xs}$ such that with probability at least $\delta$ it holds that:
\begin{equation*}
\left| \textcolor{black}{\widehat{\eta}}(H_{N},f) - \pi_\gamma f \right| \geq \sqrt{\frac{\sigma^2_\gamma f \cdot \log \frac{1}{2\delta}}{N (1 - \beta\gamma)}},
\end{equation*}
where $H_N \sim P^N_{\nu,\textcolor{black}{\vrho}}$ is collected with the reset policy $\textcolor{black}{\vrho}$.
\end{restatable}
\begin{proofsketch}
	The proof is based on the MC construction:
	\vspace{-.4cm}
	\begin{center}
	\begin{tikzpicture}[->, >=stealth', auto, semithick, node distance=3cm, scale=.8]
		\tikzstyle{every state}=[fill=white,draw=black,thick,text=black,scale=1]
		\node[state]    (A)             {$A$};
		\node[state]    (B)[right of=A]   {$B$};
		\path
		(A) edge[loop left]     node{$p+\beta$}         (A)
    		edge[bend left,above]      node{$1-p-\beta$}      (B)
		(B) edge[loop right]    node{$1-p$}     (B)
    		edge[bend left,below]     node{$p$}         (A);
	\end{tikzpicture}
	\end{center}\vspace{-.5cm}
	having two states $\Xs = \{A, B\}$, kernel $P$ parametrized via $\beta \in [0, 1)$ and $p \in (0, 1 - \beta)$, initial state distribution $\nu = (q, 1 - q)$ parametrized via $q \in (0, 1)$. By computing the invariant measure $\pi$ for the kernel $P$, it is easy to verify that the MC has spectral gap $1 - \beta$ for every value of $p$.
	
	We consider a pair of functions $f_1, f_{-1}$ defined as $f_1 (B) = f_{-1} (B) = 0$, $f_1 (A) = 1$, $f_{-1} (A) = -1$. Crucially, any estimator cannot distinguish the two functions if the state $A$ is never visited. With this intuition, we can lower bound the probability of making an error $\epsilon \in [0, 1]$ through the probability of visiting $A$. For $p$ and $q$ such that $\pi_\gamma f_1 = \epsilon$ (consequently $\pi_\gamma f_{-1} = -\epsilon$), we can derive:
	\begin{align*}
	\sup_{\substack{P,\nu,f \\ \text{with spectral gap $1-\beta$}}} & \Prob_{H_N \sim P^N_{\nu,\textcolor{black}{\vrho}} }\left( \left| \textcolor{black}{\widehat{\eta}}(H_{N},f) - \pi_\gamma f \right| \geq \epsilon \right) \\
	& \ge \frac{1}{2} \Prob_{H_N \sim P^N_{\nu,\textcolor{black}{\vrho}} }  \left(  \textcolor{black}{\widehat{\eta}}(H_{N},f_{-1}) =  \textcolor{black}{\widehat{\eta}}(H_{N},f_{1})\right) \\
	& \ge \frac{1}{2}  \min \{1-q,1-p\}^{N}.
	\end{align*}
	Then, we optimize the values of $p$ and $q$ to make the bound tight, and with some algebraic manipulations we get:
	\begin{equation*}
	\frac{1}{2}  \min \{1-q,1-p\}^{N} \ge \exp \left( - \frac{\epsilon^2 N (1 - \beta \gamma)}{\sigma^2_\gamma f} \right),
	\end{equation*}
	in the regime $\epsilon \in \big[ 0, \frac{1 - \beta}{1 - \beta\gamma} \big]$. The statement follows by reformulating the lower bound in terms of deviation $\epsilon$ for a small enough $\delta \in \big( 0, \frac{1}{2} \exp \big(- \frac{N (1 - \beta)^2}{ \sigma^2_\gamma f (1 - \beta \gamma)} \big) \big)$.
\end{proofsketch}

The presented minimax lower bound establishes an instance-dependent rate of order $\widetilde{\Omega} (1 / \sqrt{N (1 - \beta \gamma)})$ for the deviation in $\gamma$--discounted mean estimation, which is the first result that connects the statistical complexity of the problem with both the mixing property of the chain and the discount factor through the term $(1 - \beta\gamma)$. Thus, we can appreciate the role of the spectral gap $1-\beta$ in governing the complexity of the estimation problem.
It is worth noting that, when $\beta = 0$ and the chain mixes instantly making all the collected samples independent, the result reduces to H\"oeffding's rate~\citep{boucheron2013concentration} for independent random variables $\widetilde{\Omega} (1 / \sqrt{N})$. When $\gamma = 1$, it reduces to H\"oeffding's rate for general MCs $\widetilde{\Omega} (1 / \sqrt{N (1 - \beta) })$~\citep{fan2021hoeffding}. Notably, in the latter setting, the problem reduces to the estimation of the mean of a function under the stationary distribution $\pi$ of an MC. Finally, when $\beta = 1$ and the chain never mixes, the $\gamma$--discounted estimation problem is still well-defined for $\gamma < 1$. In Appendix~\ref{apx:proofs_minimax}, we report an additional result which shows that resetting the chain is indeed necessary in the latter no-mixing regime.

\section{ANALYSIS OF $\gamma$--DISCOUNTED MEAN ESTIMATORS}\label{sec:estimators}
In this section, we analyze four estimation algorithms $\mathfrak{A}=(\vrho,\widehat{\eta})$ for the $\gamma$--discounted mean $\pi_\gamma f$ from \emph{computational} and \emph{statistical} perspectives. We derive suitable concentration inequalities, compare the estimators and discuss their tightness \wrt the provided lower bound. We consider two classes of estimation algorithms, based on the nature of the reset policy: \emph{Fixed-Horizon Reset} (FHR, Section~\ref{sec:fixedHorizon}) and \emph{Adaptive-Horizon Reset} (AHR, Section~\ref{sec:adaptiveAnalysis}). Both classes of estimators assume the knowledge of the discount factor $\gamma$ but not of the absolute spectral gap $1-\beta$. Table~\ref{tab:summary} summarizes the properties of the presented estimators. The proofs of the results of this section are reported in Appendix~\ref{sec:apxProofs2}.

{\renewcommand{\arraystretch}{1.7}
\thinmuskip=1mu
\medmuskip=1mu
\thickmuskip=1mu
\begin{table*}[t]
\centering
\small
\begin{threeparttable}
\begin{tabular}{l|cc|C{2cm}C{2cm}C{3cm}}
\toprule
 \cellcolor{gray!50}& \multicolumn{2}{c}{\cellcolor{gray!50}\textbf{Computational properties}}  & \multicolumn{3}{c}{\cellcolor{gray!50}\textbf{Statistical properties}} \\
\cellcolor{gray!50}& \cellcolor{gray!25} &\cellcolor{gray!25}  &  \multicolumn{2}{c}{\cellcolor{gray!25}\emph{Concentration rate} \tnote{$\dagger$, $\ddagger$}}  & \cellcolor{gray!25} \\
\multirow{-3}{*}{\cellcolor{gray!50}\textbf{Estimator}} & \multirow{-2}{*}{\cellcolor{gray!25}\emph{\# parallel workers}} & \multirow{-2}{*}{\cellcolor{gray!25}\emph{Time complexity}\tnote{${}^*$}} & \cellcolor{gray!10} $\beta=0$ & \cellcolor{gray!10} $\beta=1$ & \multirow{-2}{*}{\cellcolor{gray!25}\emph{Minimax optimal}\tnote{$\mathsection$}} \\
\midrule
FHN & \multirow{2}{*}{$\displaystyle\frac{N}{T}$}  &  \multirow{2}{*}{$T$} & $\displaystyle \frac{1}{\sqrt{N}}$\tnote{$\mathparagraph$} &  $\displaystyle \frac{1}{\sqrt{N(1-\gamma)}}$\tnote{$\mathparagraph$} & $\begin{gathered} \text{\xmark}^\parallel \\ \text{(\cmark~for $\beta \in \{0,1\}$)} \end{gathered} $\\ 
\cdashline{1-1}[1pt/1pt]
\cdashline{4-6}[1pt/1pt]
FHC &  &  & $\displaystyle \frac{1}{\sqrt{N}}$\tnote{$\mathparagraph$} & $\displaystyle \frac{1}{\sqrt{N(1-\gamma)}}$\tnote{$\mathparagraph$} &  $\begin{gathered} \text{\xmark}^\parallel \\ \text{(\cmark~for $\beta \in \{0,1\}$)} \end{gathered} $\\ 
\cdashline{1-6}[1pt/1pt]
OS  & \multirow{2}{*}{$\begin{gathered} M \;\;\;\;{\footnotesize\text{       where}} \\ {\footnotesize \text{$M-1 \sim \mathrm{Bin}(N-1,1-\gamma)$}} \end{gathered}$} &  \multirow{2}{*}{$\displaystyle \min\bigg\{N, \frac{\log({N^2}/{\delta})}{1-\gamma} \bigg\}${}\small\tnote{$\ddagger$}}  & \multicolumn{2}{c}{$\displaystyle    \frac{1}{\sqrt{N(1-\gamma)}} $} & \xmark \\
\cdashline{1-1}[1pt/1pt]
\cdashline{4-6}[1pt/1pt]
AS  &  &  &  \multicolumn{2}{c}{$\displaystyle  \frac{1}{\sqrt{N(1-\beta\gamma)}} $} & \cmark \\ 
\bottomrule
\end{tabular}
\vspace{-.2cm}
\begin{multicols*}{3}
{\footnotesize
\begin{tablenotes}
\item[${}^*$] Big-$O$.
\item[$\dagger$] Big-$\widetilde{O}$.
\item[$\ddagger$] With probability at least $1-\delta$.
\item[$\mathsection$] According to our analysis.
\item[$\mathparagraph$] Selecting $T = O (\log N / \log (1/\gamma))$.
\item[$\parallel$] At least for $\beta \in (\overline{\beta},1)$ with $\overline{\beta} < 1$.
\end{tablenotes}
}
\end{multicols*}
\end{threeparttable}
\vspace{-.5cm}
\caption{Summary of the computational and statistical properties of the considered estimators.}\label{tab:summary}
\vspace{-.3cm}
\end{table*}
}

\subsection{Fixed-Horizon Estimation Algorithms}\label{sec:fixedHorizon}
The \emph{Fixed-Horizon} (FH) estimation algorithms perform a reset action  after having experienced a fixed number of transitions $T \in \Nat$, \ie generating trajectories with fixed \emph{horizon}. Thus, given $N$ transitions, the number of trajectories is given by $M = \lceil N / T\rceil$, with the last one possibly shorter $T_{M} = N-(M-1)T$. Thus, the reset policy takes the form  $\rho_t^{\text{FHR}}(\cdot|H_t,X_t) = \delta_{\mathbf{1}\{t \bmod T = 0\}}$. For the sake of the analysis, we assume that $N \bmod T = 0$, so that all trajectories have the same horizon $T$.

\subsubsection{Computational Properties}
The FHR reset policy $\vrho^{\text{FHR}}$ is easily parallelizable, as the horizon $T$ is known in advance. Thus, we need $M = N/T$ workers, each collecting a trajectory of $T$ samples, which also corresponds to the time complexity $O(T)$.

\subsubsection{Statistical Properties}
In the class of FH estimation algorithms, we consider two estimators $\widehat{\eta}$, the \emph{Fixed-Horizon Non-corrected} (FHN) and the \emph{Fixed-Horizon Corrected} (FHC) estimators:
\begin{align}
	& \widehat{\eta}_{\text{FHN}}(H_N,f) = \frac{1}{M} \sum_{i=0}^{M-1} \textcolor{black}{(1-\gamma)} \sum_{j=0}^{T-1} \gamma^j f(X_{Ti+j}), \\
	& \widehat{\eta}_{\text{FHC}}(H_N,f) = \frac{1}{M} \sum_{i=0}^{M-1}  \textcolor{black}{\frac{1-\gamma}{1-\gamma^T}} \sum_{j=0}^{T-1} \gamma^j f(X_{Ti+j}).
\end{align}
Both estimators are based on a sample average over the $M = N/T$ collected trajectories. The samples of each trajectory are weighted by the discount factor $\gamma$ raised to a suitable power. The difference between the two estimators lies in the coefficient that multiplies the inner summation. While in the FHN this constant disregards the fact that the summation is limited to the horizon $T$ employing $1-\gamma$ as normalizing constant, the FHC accounts for this by selecting the proper constant $\frac{1-\gamma^T}{1-\gamma} = \sum_{j=0}^{T-1} \gamma^j$. Nonetheless, as we shall see, it is not guaranteed that one estimator always outperforms the other in all regimes. These estimators are the most widely employed approaches for estimating $\gamma$--discounted means in RL~\citep[\eg][]{DeisenrothNP13, thomas2014bias, MetelliPR20}.

\textbf{Bias Analysis}~~As they truncate each trajectory after $T$ transitions, the FH estimators are affected by a bias, vanishing for large $T$, which is bounded as follows.

{\thinmuskip=1mu
\medmuskip=1mu
\thickmuskip=1mu
\begin{restatable}[FH Estimators -- Bias]{prop}{bias}\label{prop:biasFH}
	Let $H_N \sim P^N_{\nu,\textcolor{black}{\vrho}}$ with the reset policy $\rho_t = \delta_{ \indic \{ t \bmod \horizon = 0 \} } $, and let $f : \Xs \to [0, 1]$. The bias of the FHN and FHC estimators are upper bounded as:
	\begin{equation*}
	\resizebox{8.2cm}{!}{$\displaystyle
	\begin{aligned}
		&\Bias_{H_N \sim P_{\nu,\vrho}^N}[\widehat{\eta}_{\text{FHN}} (H_N, f)] \le b_{\gamma,T}^{\text{FHN}} \coloneqq \gamma^T, \\
		& \Bias_{H_N \sim P_{\nu,\vrho}^N}[\widehat{\eta}_{\text{FHC}} (H_N, f)] \le b_{\beta,\gamma, T}^{\text{FHC}} \coloneqq (1-\gamma) \gamma^T \min_{\substack{t_0 \in \dsb{0,T} \\ s_0 \in \Nat}}  \Bigg\{ \frac{2-\gamma^{t_0}-\gamma^{s_0}}{1-\gamma} \\
		& \quad + \left( \frac{(\beta\gamma)^{t_0}(1-(\beta\gamma)^{T-t_0})}{(1-\gamma^T)(1-\beta\gamma)} + \frac{(\beta\gamma)^{s_0}\beta^{T}}{1-\beta\gamma}\right) \sqrt{\chi_2(\nu \| \pi) \sigma^2 f} \Bigg\} .
	\end{aligned}$}
	\end{equation*}
\end{restatable}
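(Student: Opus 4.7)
The plan is to compute $\E[\widehat{\eta}]$ in closed form for each estimator---using that under the FHR reset policy the $M=N/T$ trajectories are i.i.d.\ and each starts from $\nu$---and then to bound the deviation from $\pi_\gamma f$. By linearity,
\begin{equation*}
\E[\widehat{\eta}_{\text{FHN}}(H_N,f)] = (1-\gamma)\sum_{j=0}^{T-1} \gamma^j\, \nu P^j f,
\end{equation*}
so the FHN bias equals the truncated tail $(1-\gamma)\sum_{j\geq T}\gamma^j\, \nu P^j f$, which via $f\in[0,1]$ is bounded by $(1-\gamma)\gamma^T/(1-\gamma) = \gamma^T$, yielding $b_{\gamma,T}^{\text{FHN}}$.

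For FHC the same argument gives $\E[\widehat{\eta}_{\text{FHC}}(H_N,f)] = \tfrac{1-\gamma}{1-\gamma^T}\sum_{j=0}^{T-1}\gamma^j\, \nu P^j f$. Setting $e_j \coloneqq \nu P^j f - \pi f$ and exploiting that both $\tfrac{1-\gamma}{1-\gamma^T}\sum_{j=0}^{T-1}\gamma^j$ and $(1-\gamma)\sum_{j\geq 0}\gamma^j$ equal $1$, so that the $\pi f$-contributions cancel, the bias rearranges to
\begin{equation*}
\E[\widehat{\eta}_{\text{FHC}}(H_N,f)] - \pi_\gamma f = \frac{(1-\gamma)\gamma^T}{1-\gamma^T}\sum_{j=0}^{T-1}\gamma^j e_j \;-\; (1-\gamma)\sum_{j=T}^{\infty}\gamma^j e_j.
\end{equation*}
The leading $\gamma^T$ already captures the truncation rate; what remains is to refine the constant using the mixing of $P$.

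To that end, I bound $|e_j|$ in two ways. The crude bound $|e_j|\leq 1$ follows from $f\in[0,1]$. For the sharp bound I use that $(P-\Pi)^j = P^j - \Pi$ for $j\geq 1$ (since $\Pi P = P\Pi = \Pi^2 = \Pi$), that $P-\Pi$ annihilates constants so that $(P-\Pi)^j f = (P-\Pi)^j (f-\pi f)$ has $\pi$-mean zero, and the operator-norm identity $\|P-\Pi\|_{\pi,2\to 2} = \beta$. Cauchy--Schwarz in $L^2(\pi)$ then yields $|e_j| = |(\nu - \pi)(P-\Pi)^j (f - \pi f)| \leq \beta^j \sqrt{\chi_2(\nu\|\pi)\, \sigma^2 f}$, valid for every $j \geq 0$. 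Splitting each of the two sums at cut-offs $t_0 \in \dsb{0,T}$ and $s_0 \in \Nat$---using the crude bound below the cut-off and the spectral bound above it---produces four (finite or infinite) geometric series that sum in closed form to the four terms of the stated bound. Taking the minimum over $(t_0, s_0)$ interpolates between the loose regime (crude everywhere, $t_0 = T$, $s_0$ large) and the tight regime (spectral everywhere, $t_0 = s_0 = 0$).

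The main obstacle will be the careful bookkeeping in the FHC decomposition: ensuring that the $\pi f$-contributions cancel exactly and that $\gamma^T$ factors out cleanly is what makes the $\gamma^T$ prefactor appear in the final bound. A secondary subtlety is the $\chi_2$-based spectral bound, which crucially leverages that $P - \Pi$ annihilates constants so that one can work with the mean-zero function $f - \pi f$ and pair it against $\frac{\mathrm{d}\nu}{\mathrm{d}\pi}-1$ via Cauchy--Schwarz, producing the $\sqrt{\chi_2(\nu\|\pi)\,\sigma^2 f}$ factor rather than a looser $\|f\|_{\pi,2}$-based constant.
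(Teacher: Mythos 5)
Your proposal is correct and follows essentially the same route as the paper's proof: compute the estimators' expectations exactly via the i.i.d.\ fixed-horizon trajectories, recenter around $\pi f$ so that the normalizing constants cancel, apply the two-way bound $|\nu P^j f - \pi f|\le\min\bigl\{1,\ \beta^j\sqrt{\chi_2(\nu\|\pi)\,\sigma^2 f}\bigr\}$ (your Cauchy--Schwarz/operator-norm derivation is precisely the paper's Lemma~\ref{lemma:t_stationary_inequality}), and split at cut-offs $t_0,s_0$ before summing geometric series; your FHN argument is in fact a cleaner, more direct version of the paper's. One bookkeeping point: executed literally, your decomposition carries the factor $\tfrac{1}{1-\gamma^T}$ onto the crude part of the truncated sum, yielding $\tfrac{1-\gamma^{t_0}}{(1-\gamma)(1-\gamma^T)}$ where the stated bound (and the paper's own proof, which silently drops this factor) has $\tfrac{1-\gamma^{t_0}}{1-\gamma}$ --- a shared constant-factor discrepancy rather than a flaw specific to your argument.
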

}

Some observations are in order. First, both estimators are asymptotically unbiased as the horizon $T \rightarrow + \infty$. However, none of them is asymptotically unbiased as the budget $N \rightarrow +\infty$ (provided that $T$ does not depend on $N$). Second, the bias of the FHN estimator does not depend on the spectral gap $1-\beta$ of the underlying MC. This is expected, as the normalizing constant generates a scale inhomogeneity, regardless the mixing properties of the MC. Third, the bias of the FHC estimator, instead, depends on the absolute spectral gap $1-\beta$ and on the divergence $\chi_2(\nu \| \pi)$ between the initial-state distribution $\nu$ and the stationary distribution $\pi$. Thus, in the special case in which $\pi=\nu$ a.s., the bias of the FHC estimator vanishes. Nevertheless, the dependence on $\beta$ is quite convoluted, and the bound requires an optimization over the auxiliary integer variables $t_0$ and $s_0$. Although for the general case the optimization is non-trivial, for the extreme cases $\beta \in\{0,1\}$, we obtain more interpretable expressions that are reported in the following.

{\thinmuskip=1mu
\medmuskip=1mu
\thickmuskip=1mu
\begin{restatable}[FHC Estimator -- Bias]{coroll}{biasCoroll}
	Let $H_N \sim P^N_{\nu,\textcolor{black}{\vrho}}$ with the reset policy $\rho_t = \delta_{ \indic \{ t \bmod \horizon = 0 \} } $, and let $f : \Xs \to [0, 1]$. The bias of the FHC estimator is upper bounded as:
	\begin{itemize}[leftmargin=*,topsep=0pt,noitemsep]
		\item if $\beta = 0$:
		\begin{align}
			b_{0,\gamma, T}^{\text{FHC}} = \frac{1-\gamma}{1-\gamma^T} \gamma^T \min \left\{\sqrt{\chi_2(\nu \| \pi) \sigma^2 f} ,1-\gamma^T \right\};
		\end{align}
		\item if $\beta = 1$:
		\begin{align}
		b_{1,\gamma, T}^{\text{FHC}} = 2 \gamma^T \min\left\{\sqrt{\chi_2(\nu \| \pi) \sigma^2 f} , 1 \right\}.
		\end{align}
	\end{itemize}
\end{restatable}
}

Thus, when $\chi_2(\nu \| \pi) \sigma^2 f \gg 1$ the FHC estimator suffers a smaller bias than the FHN one when $\beta=0$, but, surprisingly larger by a factor $2$, when $\beta = 1$. Indeed, when the chain is slowly mixing ($\beta \approx 1$), both will deliver poor estimations and the FHN estimator mitigates this by using a smaller normalization constant. This result, which, to the best of our knowledge, has never appeared in the literature, justifies the use of the non-corrected estimator, especially when it is known that the underlying MC is slowly mixing.

%

\textbf{Concentration Inequalities}~~Let us now move to the derivation of the concentration inequalities for the FH estimators. The technical challenge in this task consists in effectively exploiting the mixing properties of the underlying MC in order to derive tight concentration results. The following provides the general concentration result, which we particularize for specific values of $\beta$ later.

{\thinmuskip=2mu
\medmuskip=2mu
\thickmuskip=2mu
\begin{restatable}[FH Estimators -- Concentration]{thr}{concFH}\label{thr:concFH}
	Let $H_N \sim P^N_{\nu,\textcolor{black}{\vrho}}$ with the reset policy $\rho_t = \delta_{ \indic \{ t \bmod \horizon = 0 \} } $, and let $f : \Xs \to [0, 1]$. Let us define for $j_0 \in \dsb{0,T}$:
	\begin{align*}
		&c_{\beta,\gamma}(j_0) \coloneqq \frac{(\beta\gamma)^{j_0} - (\beta\gamma)^T}{1-\beta\gamma} \sqrt{\chi_2(\nu \|\pi) \sigma^2 f},\\
		&d_{\beta,\gamma}(j_0,\delta) \coloneqq \sqrt{ \frac{8T \left(\log \left(\chi_2(\nu P^{j_0} \| \pi) + 1 \right)  + 4 \log \frac{2}{\delta} \right)}{ N} }\\
		& \qquad\qquad \quad \times \sqrt{ \frac{ (1-\gamma^{j_0})^2}{(1-\gamma)^2} + \frac{(1+\beta)(\gamma^{2j_0} - \gamma^{2T})}{(1-\beta)(1-\gamma^2)} }.
	\end{align*}
	For every $\delta \in (0,1)$ with probability at least $1-\delta$, it holds:
	\begin{equation}
	\begin{aligned}
		& \left| \widehat{\eta}_{\text{FHN}} (H_N, f) - \pi_\gamma f \right|  \le  b_{\gamma,T}^{\text{FHN}} \\
		& \qquad\quad + (1-\gamma) \min_{j_0 \in \dsb{0,T}} \left\{c_{\beta,\gamma}(j_0) + d_{\beta,\gamma}(j_0,\delta) \right\}, 
	\end{aligned}
	\end{equation}
	\begin{equation}
	\begin{aligned}
	& \left| \widehat{\eta}_{\text{FHC}} (H_N, f) - \pi_\gamma f \right|  \le b_{\beta, \gamma,T}^{\text{FHC}} \\
		& \qquad\quad + \frac{1-\gamma}{1-\gamma^T} \min_{j_0 \in \dsb{0,T}} \left\{c_{\beta,\gamma}(j_0) + d_{\beta,\gamma}(j_0,\delta) \right\} .
	\end{aligned}
	\end{equation}
\end{restatable}
}
Similarly to the bias case, the resulting expression requires the optimization over a free variable $j_0 \in \dsb{0,T}$. Intuitively, $j_0$ should be selected (for analysis purpose only) as a function of $\beta$. Indeed, for slowly mixing chains ($\beta \approx 1$), we should select a small value of $j_0$ and vice versa. The following corollary provides the order of concentration for the extreme cases $\beta\in\{0,1\}$.

{\thinmuskip=1mu
\medmuskip=1mu
\thickmuskip=1mu
\begin{restatable}[FH Estimators -- Concentration]{coroll}{concFHcoroll}\label{coroll:FH}
	Let $H_N \sim P^N_{\nu,\textcolor{black}{\vrho}}$ with the reset policy $\rho_t = \delta_{ \indic \{ t \bmod \horizon = 0 \} } $, and let $f : \Xs \to [0, 1]$. Then, for any $\delta\in(0,1)$, with probability at least $1-\delta$, it holds that:\footnote{For interpretability reasons, we ignore the dependence on $\chi_2(\nu\|\pi)\sigma^2 f$.}
	 \begin{itemize}[leftmargin=*,topsep=0pt,noitemsep]
		\item if $\beta = 0$:
		\begin{equation}\resizebox{7cm}{!}{$\displaystyle
		\begin{aligned}
		&{ \left| \widehat{\eta}_{\text{FHN}} (H_N, f) - \pi_\gamma f \right|  \le O \left( \gamma^T +\sqrt{\frac{T (1-\gamma)  (1-\gamma^T)  \log \frac{2}{\delta}}{N}} \right) ,}
		\end{aligned}$}
		\end{equation}
		\begin{equation}\resizebox{7cm}{!}{$\displaystyle
		\begin{aligned}
		& \left| \widehat{\eta}_{\text{FHC}} (H_N, f) - \pi_\gamma f \right|  \le O \left( (1-\gamma)\gamma^T + \sqrt{\frac{T (1-\gamma) \log \frac{2}{\delta}}{N(1-\gamma^T)}} \right);
		\end{aligned}$}
		\end{equation}
		\item if $\beta = 1$:
		\begin{equation}
		\resizebox{7cm}{!}{$\displaystyle\begin{aligned}
		&\left| \widehat{\eta}_{\text{FHN}} (H_N, f) - \pi_\gamma f \right| \le O \left( \gamma^T + (1-\gamma^T) \sqrt{\frac{ T  \log \frac{2}{\delta}}{N}} \right), 
		\end{aligned}$}
		\end{equation}
		\begin{equation}
		\resizebox{6cm}{!}{$\displaystyle\begin{aligned}
	& \left| \widehat{\eta}_{\text{FHC}} (H_N, f) - \pi_\gamma f \right| \le O \left( \gamma^T + \sqrt{\frac{T \log \frac{2}{\delta}}{N}} \right).
		\end{aligned}$}
		\end{equation}
	\end{itemize}
\end{restatable}
}

We note that the FHC estimator outperforms (in the constants, but not in rate) the FHN when $\beta \approx 1$, whereas when $\beta \approx 0$, the FHN estimator enjoys better concentration.

\begin{remark}[About Minimax Optimality of the FH Estimators]
	A natural question, at this point, is whether the FH estimators match the minimax lower bound of Theorem~\ref{thr:minimax}. One could, in principle, select a value of the horizon $T$ depending on the spectral gap $1-\beta$ to tighten the confidence bounds. Unfortunately, $\beta$ is usually unknown in practice. Realistically, one should enforce a value of $T$ that depends on the discount factor $\gamma$, and, if necessary, on the confidence $\delta$, and the number of samples $N$. 
	
	{The FH estimators, according to our analysis, do not match the minimax lower bound for general $\beta$. When $\beta \in \{0,1\}$, we show in Appendix~\ref{sec:magic3} that the optimal $\beta$-independent choice of $T$ is $T^*_{\gamma} = O \left( (\log N)/(\log (1/\gamma)) \right)$, leading to the rate $\widetilde{O} ( 1/\sqrt{N} )$ for $\beta=0$ and $\widetilde{O} ( 1/\sqrt{N(1-\gamma)} )$ for $\beta = 1$, respectively.\footnote{Any choice of $T$ independent of $N$ (including the widely employed \quotes{effective horizon} $T = 1/(1-\gamma)$) will never lead to a consistent estimator, since the bias will not vanish as $N \rightarrow +\infty$.} In such regimes, both FH estimators nearly match the minimax lower bound. Nevertheless, in Appendix~\ref{sec:magicSec}, we show that there exists a regime of large values of $\beta$, namely $\beta \in (\overline{\beta},1)$ with $\overline{\beta} = (1+\gamma-2\gamma^T)/(1+\gamma-2\gamma^{T+1}) < 1$ for which the concentration rate is at least $\widetilde{\Omega} ( 1/\sqrt{N(1-\gamma)} )$ regardless the value of $\beta$ (when $0.3 \le \gamma < 1$), not matching the lower bound.}
\end{remark}

\subsection{Adaptive-Horizon Estimation Algorithms}\label{sec:adaptiveAnalysis}
The \emph{Adaptive-Horizon} (AH) estimation algorithms generate trajectories with possibly different horizons. 
At $t \in \Nat$, a Bernoulli random variable with parameter $1-\gamma$ is sampled, leading to the reset policy $\rho_t^{\text{AHR}}(H_t,X_t) = \mathrm{Ber}(1-\gamma)$. Thus, the horizon $T$ of each trajectory is a random variable too, where $T-1 \sim \mathrm{Geo}(1-\gamma)$ is a geometric distribution.\footnote{In a different perspective, one may first sample $T-1 \sim  \mathrm{Geo}(1-\gamma)$ and then simulate a trajectory of horizon $T$.} 

\subsubsection{Computational Properties}
In the AHR case, the parallel execution requires computing in advance the horizons $(T_i)_{i\in \dsb{M}}$ of each trajectory until we ran out of the sample budget $N$  and, subsequently, run in parallel the sample collection of each trajectory. From a technical perspective, characterizing the distribution of the individual $T_i$ is challenging. Indeed, since we need to stop as soon as we reach the budget $N$, the random variables $T_i$ become dependent. The following result characterizes the distribution of $M$ and the time complexity.

\begin{restatable}[AH Estimators -- Complexity]{thr}{ahEstimatorComp}
Let $H_N \sim P^N_{\nu,\textcolor{black}{\vrho}}$ with the reset policy $\rho_t^{\text{AHR}}(H_t,X_t) = \mathrm{Ber}(1-\gamma)$. Then, the number of trajectories $M$ is distributed such that $M-1 \sim \mathrm{Bin}(N-1,1-\gamma)$. Furthermore, for every $\delta \in (0,1)$, with probability at least $1-\delta$, the time complexity is bounded as:
\begin{align*}
	\max_{ i\in \dsb{M}} T_{i} \le O \left( \min\left\{ N, \frac{\log \left( {N^2}/{\delta} \right)}{1-\gamma} \right\} \right).
\end{align*} 
\end{restatable}

Thus, the time complexity is a minimum between $N$, as no trajectory can be longer than the maximum number of transitions, and a term that grows with $\gamma$, since for large $\gamma$ the trajectories will have, on average, longer lengths.

%
%
%

\subsubsection{Statistical Properties}
In the family of AH estimators, we analyze the concentration properties of two specific estimation algorithms: \emph{One-Sample} (OS) and \emph{All-Samples} (AS) estimators. 

\textbf{One-Sample Estimator}~~The idea behind the OS estimator is to regard the $\gamma$-discounted distribution $\pi_\gamma = \sum_{t \in \Nat} (1-\gamma)\gamma^t \nu P^t$ as the mixture of the distributions $\nu P^t$ with coefficients $(1-\gamma)\gamma^t$. The OS estimator offers a way of generating \emph{independent} samples from $\pi_\gamma$, by retaining the ones right before resetting is performed, \ie when $Y_t=1$:
\begin{align}\label{eq:OS}
	\widehat{\eta}_{\text{OS}}(H_N,f) = \frac{1}{M-1} \sum_{t=0}^{N-1} Y_t f(X_t).
\end{align}

This estimator has been used in~\citep{thomas2014bias, metelli2021safe}, mostly for theoretical reasons, being unbiased. The following result provides the concentration.

\begin{restatable}[OS Estimator -- Concentration]{thr}{osEstimatorConc}
Let $H_N \sim P^N_{\nu,\textcolor{black}{\vrho}}$  with the reset policy $\rho_t^{\text{AHR}}(H_t,X_t) = \mathrm{Ber}(1-\gamma)$, and let $f : \Xs \to [0, 1]$. For every $\delta \in (0,1)$, with probability at least $1-\delta$, it holds that:
\begin{align*}
	\left| \widehat{\eta}_{\text{OS}}(H_N,f) - \pi_\gamma f \right| \le \sqrt{\frac{2 \log \frac{8}{\delta}}{N(1-\gamma)}}.
\end{align*}
\end{restatable}

The concentration term is governed by an \quotes{effective number of samples} that is $N(1-\gamma)$. Indeed, the probability of retaining each of the $N$ transitions is $1-\gamma$. It is worth noting that the concentration bound, as expected, does not depend on the absolute spectral gap $1-\beta$, since just one sample per trajectory is considered and, consequently, the estimators guarantees vanish as $\gamma \rightarrow 1$. Thus, this estimator is not minimax optimal, according to our analysis.


\begin{figure*}[ht]
	\begin{subfigure}[t]{0.42\textwidth}
    		\centering
    		\includegraphics[scale=1, valign=t]{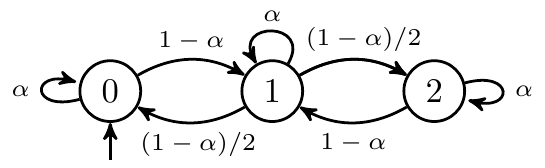}
    		\caption{Illustrative MC}
    		\label{subfig:a}
    	\end{subfigure}
    	\begin{subfigure}[t]{0.57\textwidth}
    		\centering
    		\includegraphics[scale=1, valign=t]{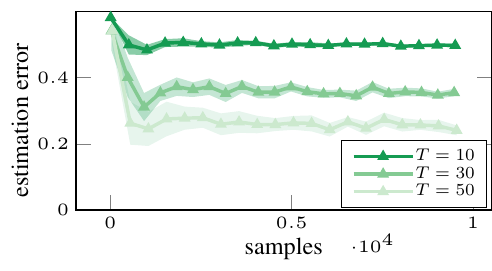}
    		\includegraphics[scale=1, valign=t]{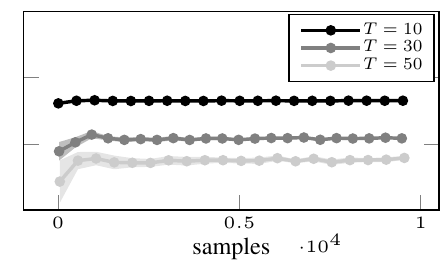}
    		\vspace{-0.3cm}
    		\caption{ \small \centering $\alpha = 0.99, \beta = 0.99, \gamma = 0.99$ }
    		\label{subfig:b}
    	\end{subfigure}
    	
    	\vspace{-0.6cm}
    	\begin{subfigure}[t]{0.475\textwidth}
    		\centering
    		\includegraphics[scale=1]{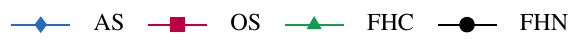}
    	\end{subfigure}
    	
    	\vspace{-0.15cm}
    	\begin{subfigure}[t]{0.2165\textwidth}
    		\centering
    		\includegraphics[scale=1, valign=t]{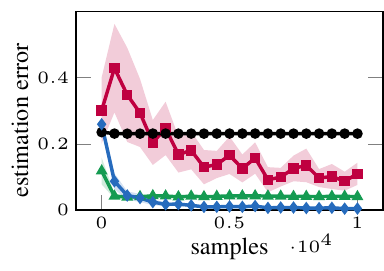}
    		\vspace{-0.3cm}
    		\caption{ \small \centering $\alpha = 0.005$ \newline \centering $\beta = 0.99, \gamma = 0.99$ }
    		\label{subfig:d}
    	\end{subfigure}
     \begin{subfigure}[t]{0.185\textwidth}
    		\centering
    		\includegraphics[scale=1, valign=t]{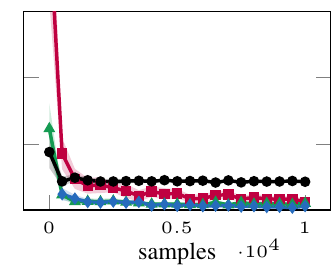}
    		\vspace{-0.3cm}
    		\caption{\small \centering $\alpha = 0.5$ \newline \centering $\beta = 0.5, \gamma = 0.9$}
    		\label{subfig:e}
    	\end{subfigure}
    	\begin{subfigure}[t]{0.185\textwidth}
    		\centering
    		\includegraphics[scale=1, valign=t]{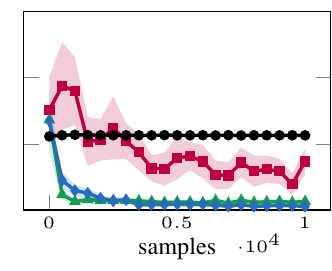}
    		\vspace{-0.3cm}
    		\caption{\small \centering $\alpha = 0.5$ \newline \centering $\beta = 0.5, \gamma = 0.99$}
    		\label{subfig:f}
    	\end{subfigure}
    	\begin{subfigure}[t]{0.185\textwidth}
    		\centering
    		\includegraphics[scale=1, valign=t]{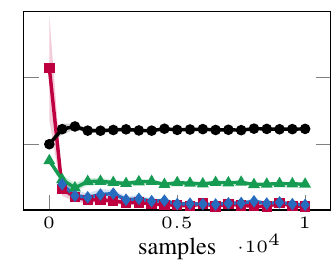}
    		\vspace{-0.3cm}
    		\caption{\small \centering $\alpha = 0.99$ \newline \centering $\beta = 0.99, \gamma = 0.9$}
    		\label{subfig:g}
    	\end{subfigure}
    	\begin{subfigure}[t]{0.18\textwidth}
    		\centering
    		\includegraphics[scale=1, valign=t]{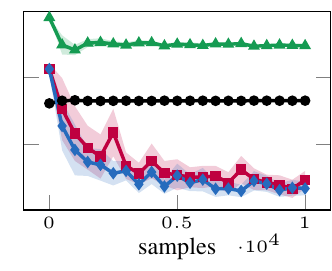}
    		\vspace{-0.3cm}
    		\caption{\small \centering $\alpha = 0.99$ \newline \centering $\beta = 0.99, \gamma = 0.99$}
    		\label{subfig:h}
    	\end{subfigure}
    	\vspace{-0.2cm}
    	\caption{$\gamma$--discounted mean estimation of the function $f(x) = (1, -1, 2)$ over the MC depicted in \textbf{(a)}. For each combination of parameter $\alpha$ and discount $\gamma$, we report the estimation error of the OS, AS, FHC, FHN estimators \textbf{(c, d, e, f, g)}. For the FHC, FHN, we provide a finer analysis on the impact of $T$ \textbf{(b)}. We report average and 95\% c.i. over 20 runs. }
    \label{fig:numerical_validation}
\end{figure*}

\textbf{All-Samples Estimator}~~
The AS estimator, instead, makes use of all the samples collected from the simulation. Clearly, this choice introduces a new trade-off since we have at our disposal a larger number of samples for estimation, but, unfortunately, within a single trajectory such samples are statistically dependent. Nevertheless, such a dependence is controlled by the mixing properties of the MC. The AS estimator takes the following form:
\begin{align}\label{eq:AS}
	\widehat{\eta}_{\text{AS}}(H_N,f) = \frac{1}{N} \sum_{t=0}^{N-1} f(X_t).
\end{align}

This estimator has been employed in~\citep{Konda02, XuWL20, EldowaBR22}.
The following result provides a concentration inequality for the AS estimator that highlights the dependence on the mixing properties.

\begin{restatable}[AS Estimator -- Concentration]{thr}{asEstimatorConc}
Let $H_N \sim P^N_{\nu,\textcolor{black}{\vrho}}$ with the reset policy $\rho_t^{\text{AHR}}(H_t,X_t) = \mathrm{Ber}(1-\gamma)$, and let $f : \Xs \to [0, 1]$. For every $\delta \in (0,1)$, with probability at least $1-\delta$, it holds that:
\begin{align*}
	\left| \widehat{\eta}_{\text{AS}}(H_N,f) - \pi_\gamma f \right| \le \sqrt{\frac{8 \log \frac{2}{\delta} + 4\log (\chi_2(\nu \| \pi_\gamma)+1)}{N(1-\beta\gamma)}}.
\end{align*}
\end{restatable}

We note the dependence with the spectral gap in $1-\beta\gamma$. Contrary to the OS estimator, the bound holds even for $\gamma\rightarrow 1$. We also observe a logarithmic dependence on the term $\chi_2(\nu \| \pi_\gamma)$ due to the small bias introduced by the sampling procedure. Most importantly, the AS estimator nearly matches the minimax lower bound of Theorem~\ref{thr:minimax}.

\section{NUMERICAL VALIDATION}
\label{sec:validation}

In this section, we confront the $\gamma$-discounted mean estimators presented in Section~\ref{sec:estimators} through numerical simulations to both support and complement the analysis of their statistical properties. To this end, we consider an illustrative family of MCs parametrized by $\alpha$ (Figure~\ref{subfig:a}). The parameter $\alpha$ allows controlling the mixing properties of the chain. If we set $\alpha$ close to either $0$ or $1$ we get a slow-mixing chain (large $\beta$), in which every state is nearly transient or absorbing respectively. $\alpha$ close to $\nicefrac{1}{2}$ gives a fast-mixing chain instead (small $\beta$). In this setting, we consider the problem of estimating the mean of the function $f(x) = (1, -1, 2)$ in different discounting regimes, namely $\gamma \in \{ 0.9, 0.99 \}$, where the performance of each estimator $\widehat{\eta}$ is measured in terms of the corresponding estimation error $|\widehat{\eta} - \pi_\gamma f|$.

In Figure~\ref{fig:numerical_validation}, we report the results of the numerical analysis. As a further testament of its compelling statistical properties, the AS estimator dominates the alternatives, achieving the smallest estimation error in every mixing-discounting regime. Interestingly, the unbiased OS estimator fails to quickly converge to the true mean with high discounting (Figures~\ref{subfig:d}~\ref{subfig:f}~\ref{subfig:h}), which is likely caused by its inherently large variance. The finite-horizon estimators FHC and FHN show a significant bias instead, despite an overall stable behavior. Although the corrected estimator FHC outperforms (as expected) the non-corrected FHN in most of the regimes (see Figures~\ref{subfig:d}-\ref{subfig:g}), the correction can skyrocket the bias in some unfortunate settings (see Figure~\ref{subfig:h}). This is particularly underwhelming as we cannot trust FHC as a default option even when committed to a finite-horizon estimation algorithm. Finally, in Figure~\ref{subfig:b} we provide a finer analysis of the finite-horizon estimators for different horizons $T$. Unsurprisingly, in a slow-mixing regime (note $\beta = 0.99$), increasing $T$ benefits the overall quality of the estimates for both FHC and FHN, as the bias is visually reduced at the cost of a slightly increased instability.

\section{CONCLUSIONS}

In this paper, we have studied the problem of estimating the mean of a function under the $\gamma$--discounted stationary distribution of an MC. We have formulated this problem with a general and flexible framework and then analyzed its intrinsic complexity through a minimax lower bound. Finally, we have considered two classes of estimation algorithms, for which we provided a study of computation and statistical properties, as well as a numerical validation.

The aim of this paper is far from being a theoretical detour, as we believe that our contribution has significant practical implications in discounted RL. Especially, the all-samples estimator resulted in the best statistical profile among the considered alternatives, while still supporting parallel sampling. This signals an avenue to develop improved ``deep reinforcement learning"~\citep{franccois2018introduction} algorithms based on sampling from the discounted kernel. Other interesting future directions include the study of the $\gamma$-discounted mean estimation for inhomogeneous functions, which is akin to the practical implementations of Q-learning~\citep{watkins1992q}, the estimation of other functionals beyond the expectation~\citep{chandak2021universal}, and extending our analysis to generalized notions of discount~\citep{yoshida2013reinforcement, franccois2015discount, pitis2019rethinking, fedus2019hyperbolic, tang2021taylor}.

Finally, our results can be of independent interest in the MC literature while bridging fundamental problems in discounted RL and concentration inequalities for MCs~\citep{samson2000concentration, glynn2002hoeffding, leon2004optimal, kontorovich2008concentration, paulin2015concentration}.

\clearpage
\bibliography{biblio}


\clearpage
\onecolumn
\appendix

\setlength{\abovedisplayskip}{8pt}
\setlength{\belowdisplayskip}{8pt}

\section{PROOFS}
\label{apx:proofs}

\subsection{Proofs of Section~\ref{sec:minimax_lower_bound}}
\label{apx:proofs_minimax}

\begingroup\renewcommand\footnote[1]{}
\minimax*
\endgroup
\begin{proof}
The proof is articulated in four steps.
\paragraph{First step: Chain construction}
	We consider a 2-states MC with state space $\Xs=\{A,B\}$ whose kernel is parametrized via $p$ and $\beta$:
	\begin{multicols}{2}
	\begin{align*}
	P=
		\begin{pmatrix}
			p+\beta & 1-p-\beta \\
			p & 1-p
		\end{pmatrix},
	\end{align*}
	
	\begin{tikzpicture}[->, >=stealth', auto, semithick, node distance=3cm]
\tikzstyle{every state}=[fill=white,draw=black,thick,text=black,scale=1]
\node[state]    (A)             {$A$};
\node[state]    (B)[right of=A]   {$B$};
\path
(A) edge[loop left]     node{$p+\beta$}         (A)
    edge[bend left,above]      node{$1-p-\beta$}      (B)
(B) edge[loop right]    node{$1-p$}     (B)
    edge[bend left,below]     node{$p$}         (A);
\end{tikzpicture} 
	\end{multicols}
	where $\beta \in [0,1)$ and $p \in (0, 1-\beta)$. The kernel $P$ admits eigenvalues $\{1,\beta\}$ and has a unique invariant measure $\pi = \left( \frac{p}{1-\beta}, 1- \frac{p}{1-\beta}\right)$. We immediately verify that $\Pi^T \odot P$ is symmetric, where $\Pi = \mathbf{1}\pi$ and $\odot$ denotes the Hadamard product. Consequently, the MC is reversible and, thus, its spectral gap is $1-\beta$. We consider as initial state distribution $\nu=(q,1-q)$ parametrized by $q \in (0,1)$. The specific values of $p$ and $q$ will be specified later in the proof. 
	
	Consider two functions $f_1, f_{-1}: \Xs \in [-1,1]$ defined as: $f_{s}(A) = s$ and $f_s(B) = 0$ for $s \in \{-1,1\}$. Simple calculations allow to show that the corresponding expectations and variances, under the $\gamma$-discounted stationary distributions are given for $s \in \{-1,1\}$ by:
	\begin{align*}
	& \pi_\gamma f_s = \sum_{x \in \Xs} \pi_\gamma(x) f_s(x) =  s\frac{(1-\gamma)q+\gamma p}{1-\beta \gamma}, \\
	& \sigma_f^2 f_s = \pi_\gamma(f_s - \mathbf{1} \pi_\gamma f)^2 = \pi_\gamma f_s (1 - \pi_\gamma f_s).
	\end{align*}
	
	\paragraph{Second step: Lower bounding the probability of deviation}
	We now proceed at lower bounding the probability of making an error larger than $\epsilon$, with $\epsilon \in [0,1]$. The intuition is that we need to compute the probability not to distinguish the two instances, \ie never visiting state $A$.

	For any values of $p$ and $q$ such that $\pi_\gamma f_1 = \epsilon$, considering a generic estimator $(\textcolor{black}{\vrho}, \textcolor{black}{\widehat{\eta})}$, we have:
	\begin{align}
	\sup_{\substack{P,\nu,f \\ \text{with spectral gap $\beta$}}} & \Prob_{H_N \sim P^N_{\nu,\textcolor{black}{\vrho}} }\left( \left| \textcolor{black}{\widehat{\eta}}(H_{N},f) - \pi_\gamma f \right| \geq \epsilon \right) \ge \max_{s \in \{-1,1\}} \Prob_{H_N \sim P^N_{\nu,\textcolor{black}{\vrho}} }\left( \left| \textcolor{black}{\widehat{\eta}}(H_{N},f_s) - \pi_\gamma f_s \right| \geq \epsilon \right) \\
	& \ge \frac{1}{2}  \left( \Prob_{H_N \sim P^N_{\nu,\textcolor{black}{\vrho}} }\left( \left| \textcolor{black}{\widehat{\eta}}(H_{N},f_{-1}) - \pi_\gamma f_{-1} \right| \geq \epsilon \right) + \Prob_{H_N \sim P^N_{\nu,\textcolor{black}{\vrho}} }\left( \left| \textcolor{black}{\widehat{\eta}}(H_{N}f_1) - \pi_\gamma f_1 \right| \geq \epsilon \right)\right) \label{p:001} \\
	& \ge \frac{1}{2}  \Prob_{H_N \sim P^N_{\nu,\textcolor{black}{\vrho}} }\left( \left| \textcolor{black}{\widehat{\eta}}(H_{N},f_{-1}) - \pi_\gamma f_{-1} \right| \geq \epsilon \vee \left| \textcolor{black}{\widehat{\eta}}(H_{N},f_{1}) - \pi_\gamma f_1 \right| \geq \epsilon \right) \label{p:002}\\
	& \ge \frac{1}{2} \Prob_{H_N \sim P^N_{\nu,\textcolor{black}{\vrho}} }  \left(  \textcolor{black}{\widehat{\eta}}(H_{N},f_{-1}) =  \textcolor{black}{\widehat{\eta}}(H_{N},f_{1})\right) \label{p:003}\\
	& \ge \frac{1}{2}  \Prob_{H_N \sim P^N_{\nu,\textcolor{black}{\vrho}} } \left( \forall t \in \{0,\dots, N-1\}\,:\, X_{t} = B \right) \label{p:004} \\
	& = \frac{1}{2} \nu(\{B\}) \min \left\{ \nu(\{B\}), P(\{B\}|B) \right\}^{N-1} \ge \frac{1}{2}  \min \{1-q,1-p\}^{N}, \label{p:005}
	\end{align}
	where in line~\eqref{p:001}  we exploited the inequality $\max\{x,y\} \ge \frac{1}{2}(x+y)$, in line~\eqref{p:002} we employed a union bound, in line~\eqref{p:003} we used the fact that $\pi_\gamma f_s = s \epsilon$ by assumption and consequently the event $\{ \textcolor{black}{\widehat{\eta}}(H_{N},f_{-1}) =  \textcolor{black}{\widehat{\eta}}(H_{N},f_{1})\}$ is included in the event $\{\left| \textcolor{black}{\widehat{\eta}}(H_{N},f_{-1}) - \pi_\gamma f_{-1} \right| > \epsilon \} \cup \{\left| \textcolor{black}{\widehat{\eta}}(H_{N},f_{1}) - \pi_\gamma f_{-1} \right| > \epsilon \}$. In line~\eqref{p:004} comes from the observation that in order to have equal values of the estimators we must not distinguish the two chain instances, that in turn happens only when we never visit state $A$. Finally, line~\eqref{p:005} follows by taking the minimum probability for never landing to state $A$ depending on both reset and transition probability.

	For any values of $p$ and $q$ such that $\pi_\gamma f_1 = \epsilon$, considering a generic estimator $(\vrho, \widehat{\eta})$, we have:
	\begin{align}
	\sup_{\substack{P,\nu,f \\ \text{with spectral gap $\beta$}}} & \Prob_{H_N \sim P^N_{\nu,\vrho }}\left( \left| \widehat{\eta}(H_{N},f) - \pi_\gamma f \right| \geq \epsilon \right) \ge \max_{s \in \{-1,1\}} \Prob_{H_N \sim P^N_{\nu,\vrho} }\left( \left| \widehat{\eta}(H_{N},f_s) - \pi_\gamma f_s \right| \geq \epsilon \right) \\
	& \ge \frac{1}{2}  \left( \Prob_{H_N \sim P^N_{\nu,\vrho} }\left( \left| \widehat{\eta}(H_{N},f_{-1}) - \pi_\gamma f_{-1} \right| \geq \epsilon \right) + \Prob_{H_N \sim P^N_{\nu,\vrho}}\left( \left| \widehat{\eta}(H_{N}f_1) - \pi_\gamma f_1 \right| \geq \epsilon \right)\right) \label{p:001} \\
	& \ge \frac{1}{2}  \Prob_{H_N \sim P^N_{\nu,\vrho} }\left( \left| \widehat{\eta}(H_{N},f_{-1}) - \pi_\gamma f_{-1} \right| \geq \epsilon \vee \left| \widehat{\eta}(H_{N},f_{1}) - \pi_\gamma f_1 \right| \geq \epsilon \right) \label{p:002}\\
	& \ge \frac{1}{2} \Prob_{H_N \sim P^N_{\nu,\vrho} }  \left(  \widehat{\eta}(H_{N},f_{-1}) =  \widehat{\eta}(H_{N},f_{1})\right) \label{p:003}\\
	& \ge \frac{1}{2}  \Prob_{H_N \sim P^N_{\nu,\vrho} } \left( \forall t \in \{0,\dots, N-1\}\,:\, X_{t} = B \right) \label{p:004} \\
	& = \frac{1}{2} \nu(\{B\}) \min \left\{ \nu(\{B\}), P(\{B\}|B) \right\}^{N-1} \ge \frac{1}{2}  \min \{1-q,1-p\}^{N}, \label{p:005}
	\end{align}
	where in line~\eqref{p:001}  we exploited the inequality $\max\{x,y\} \ge \frac{1}{2}(x+y)$, in line~\eqref{p:002} we employed a union bound, in line~\eqref{p:003} we used the fact that $\pi_\gamma f_s = s \epsilon$ by assumption and consequently the event $\{ \widehat{\eta}(H_{N},f_{-1}) =  \widehat{\eta}(H_{N},f_{1})\}$ is included in the event $\{\left| \widehat{\eta}(H_{N},f_{-1}) - \pi_\gamma f_{-1} \right| > \epsilon \} \cup \{\left| \widehat{\eta}(H_{N},f_{1}) - \pi_\gamma f_{-1} \right| > \epsilon \}$. In line~\eqref{p:004} comes from the observation that in order to have equal values of the estimators we must not distinguish the two chain instances, that in turn happens only when we never visit state $A$. Finally, line~\eqref{p:005} follows by taking the minimum probability for never landing to state $A$ depending on both reset and transition probability.
	
	\paragraph{Third step: Tightening the bound} Now, we need to compute the values of $p$ and $q$ in order to make the bound as tight as possible while fulfilling all the constraints. This leads to the optimization problem:
	\begin{align*}
		& \max_{p,q} \min\{1-q,1-p\}\\
		& \text{s.t. } 0 < p < 1-\beta \\
		& \phantom{\text{s.t. }} 0<q < 1 \\
		& \phantom{\text{s.t. }} \frac{(1-\gamma)q+\gamma p}{1-\beta \gamma} = \epsilon.
	\end{align*}
	First of all, we exploit the constraint with equality to express $q$ as a function of $p$, \ie $q = \frac{\epsilon(1-\beta\gamma)}{1-\gamma}- \frac{\gamma p}{1-\gamma}$. Now, we consider the two cases:
	
	\underline{Case 1: $p \le q$}~~In this case, the $\min$ in the objective function reduces to $1-q = 1 - \frac{\epsilon(1-\beta\gamma)}{1-\gamma}+ \frac{\gamma p}{1-\gamma}$ that is maximized by taking the maximum value of $p$ fulfilling the constraints:
	\begin{align*}
	& \max_{p} 1 - \frac{\epsilon(1-\beta\gamma)}{1-\gamma}+ \frac{\gamma p}{1-\gamma}\\
	& \text{s.t. } 0 < p < 1-\beta \\
	&	\phantom{\text{s.t. }} 0 < q < 1 \implies 0<\frac{\epsilon(1-\beta\gamma)}{1-\gamma}- \frac{\gamma p}{1-\gamma} < 1 \\
	&	\phantom{\text{s.t. }} p \le q \implies p \le \frac{\epsilon(1-\beta\gamma)}{1-\gamma}- \frac{\gamma p}{1-\gamma}
	\end{align*}
	This leads to:
	\begin{align*}
		p = \begin{cases}
			\epsilon(1-\beta\gamma) & \text{if } \epsilon \in \left[ 0, \frac{1-\beta}{1-\beta\gamma}\right] \\
			1-\beta & \text{if } \epsilon \in \left(\frac{1-\beta}{1-\beta\gamma}, 1\right]
		\end{cases} \quad\implies\quad 1-q = \begin{cases}
			1-\epsilon(1-\beta\gamma) & \text{if } \epsilon \in \left[ 0, \frac{1-\beta}{1-\beta\gamma}\right] \\
			\frac{(1-\beta\gamma)(1-\epsilon)}{1-\gamma} & \text{if } \epsilon \in \left(\frac{1-\beta}{1-\beta\gamma}, 1\right]
		\end{cases}.
	\end{align*}
	
	\underline{Case 2: $p > q$}~~In this case, the $\min$ in the objective function reduces to $1-p$ that is maximized by taking the minimum value of $p$ fulfilling the constraints:
		\begin{align*}
	& \max_{p}1-p\\
	& \text{s.t. } 0 < p < 1-\beta \\
	&	\phantom{\text{s.t. }} 0 < q < 1 \implies 0<\frac{\epsilon(1-\beta\gamma)}{1-\gamma}- \frac{\gamma p}{1-\gamma} < 1 \\
	&	\phantom{\text{s.t. }} p > q \implies p > \frac{\epsilon(1-\beta\gamma)}{1-\gamma}- \frac{\gamma p}{1-\gamma}
	\end{align*}
	The problem is feasible only when $\epsilon \in \left[ 0, \frac{1-\beta}{1-\beta\gamma}\right]$. In such a case, we have:
	\begin{align*}
		p = \epsilon(1-\beta\gamma) \quad\implies\quad 1 - q = 	1-\epsilon(1-\beta\gamma) \quad \text{if } \epsilon \in \left[ 0, \frac{1-\beta}{1-\beta\gamma}\right]	.
	\end{align*}
	
	\paragraph{Fourth step: Algebraic manipulation}
	We now proceed at performing some manipulation to get more interpretable result. In the small-$\epsilon$ regime, we have:
	\begin{align}
	\left( 1- \epsilon(1-\beta\gamma) \right)^{N} & \ge \exp \left(- \frac{\epsilon N (1-\beta\gamma)}{1-\epsilon(1-\beta\gamma)} \right) \label{dev:001} \\ 
	& = \exp \left( -\frac{\epsilon^2 N (1-\beta\gamma)}{\sigma^2_\gamma f} \cdot \frac{1-\epsilon}{1-\epsilon(1-\beta\gamma)}\right) \label{dev:002}\\
	& \ge \exp \left( -\frac{\epsilon^2 N (1-\beta\gamma)}{\sigma^2_\gamma f}\right),
	\end{align}
	where we exploited the inequality $1 - x \geq \exp (- x / (1 - x))$ in line~\eqref{dev:001}, and the fact that $\sigma^2 f = \epsilon (1 - \epsilon)$ in line~\eqref{dev:002}.
	Following similar steps for the large-$\epsilon$ regime, we have:
	\begin{align*}
		\left( \frac{(1-\beta\gamma)(1-\epsilon)}{1-\gamma} \right)^N & \ge \exp \left( - N \frac{1- \frac{(1-\beta\gamma)(1-\epsilon)}{1-\gamma}}{\frac{(1-\beta\gamma)(1-\epsilon)}{1-\gamma}} \right) \\
		& = \exp \left( -  \frac{N \epsilon \left(1- \frac{(1-\beta\gamma)(1-\epsilon)}{1-\gamma}\right)}{\sigma^2_\gamma f} \cdot \frac{1-\epsilon}{\frac{(1-\beta\gamma)(1-\epsilon)}{1-\gamma}} \right)  \\
		& =\exp \left( -  \frac{N \epsilon \left( \epsilon(1-\beta \gamma) - \gamma (1-\beta)\right)}{\sigma^2_\gamma f \cdot (1-\beta\gamma)}  \right)\\
		& \ge \exp \left( -  \frac{N \epsilon^2 }{\sigma^2_\gamma f }  \right).
	\end{align*}	
	Finally, we can reformulate the previous results on the confidence $\delta$ in terms of deviation $\epsilon$, such that we have with probability at least $1 - \delta$
	\begin{align*}
		\inf_{\vrho, \widehat{\eta}} \sup_{\substack{P,\nu,f \\ \text{with spectral gap $1-\beta$}}} 
		\left| \widehat{\eta}(H_{N},f) - \pi_\gamma f \right| \geq 
		\begin{cases}
		\sqrt{\frac{\sigma^2_\gamma f \log \frac{1}{2\delta}}{N (1 - \beta\gamma)}} & \text{if } \delta \in \left( 0,   \frac{1}{2} \exp \left(- \frac{N (1 - \beta)^2}{ \sigma^2_\gamma f (1 - \beta \gamma)} \right)  \right) \\
		 \sqrt{\frac{\sigma^2_\gamma f \log \frac{1}{2\delta}}{N}} & \text{otherwise}
		\end{cases},
	\end{align*}
	which concludes the proof.
For the sake of clarity, we only report the most meaningful high-confidence regime in the theorem statement.
\end{proof}

The following result shows that the reset is unavoidable, at least, for the case in which the underlying MC does not mix, \ie when $\beta = 1$.

\begin{thr}[Reset is Unavoidable]
For any non-reset policy, \ie  $\vrho = (\rho_t)_{t \in \Nat}$ such that $\rho(\cdot|H_t,X_t) = \delta_{0}(\cdot)$ it holds:
\begin{align*}
\inf_{\widehat{\eta}} \sup_{\substack{P,\nu,f \\ \text{with spectral gap $1-\beta$}}} \Prob_{H_N \sim P^N_{\nu,\textcolor{black}{\delta_0}} }\left( \left| \textcolor{black}{\widehat{\eta}}(H_{N},f) - \pi_\gamma f \right| > \frac{1}{2} \right) \ge \frac{1}{4} \left( \frac{1+\beta}{2} \right)^{N-1}.
\end{align*}
\end{thr}

\begin{proof}
	We consider a 2-states MC $\Xs=\{A,B\}$, with kernel:
	\begin{align*}
		P= \begin{pmatrix}
			\frac{1+\beta}{2} & \frac{1-\beta}{2} \\
			 \frac{1-\beta}{2} & \frac{1+\beta}{2}
		\end{pmatrix}.
	\end{align*}
	It is easy to see that the spectral gap is $\beta$. Consider two functions $f_1, f_{-1}: \Xs \in [-1,1]$ defined as: $f_{s}(A) = s$ and $f_s(B) = 0$ for $s \in \{-1,1\}$. Consider the initial state distribution $\nu = (1/2,1/2)$. It is simple to show that $\pi_\gamma = (1/2,1/2)$ and, consequently $\pi_\gamma f_{s} = s/2$ for $s \in \{-1,1\}$. Consider now a non-reset policy, it holds that:
	\begin{align*}
	 \sup_{\substack{P,\nu,f \\ \text{with  spectral gap $1-\beta$}}}  & \Prob_{H_N \sim P^N_{\nu,\delta_0} }\left( \left| \widehat{\eta}(H_{N}) - \pi_\gamma f \right| > \frac{1}{2} \right) \\
	 &\ge \max_{f_s : s \in \{-1,1\}} \Prob_{H_N \sim P^N_{\nu,\delta_0} }\left( \left| \widehat{\eta}(H_{N}) - \pi_\gamma f_s \right| > \frac{1}{2} \right) \\
	&\ge \frac{1}{2}  \left( \Prob_{H_N \sim P^N_{\nu,\delta_0} }\bigg( \left| \widehat{\eta}(H_{N}) - \pi_\gamma f_{-1} \right| > \frac{1}{2} \right) 
	+ \Prob_{H_N \sim P^N_{\nu,\delta_0} }\left( \left| \widehat{\eta}(H_{N}) - \pi_\gamma f_1 \right| > \frac{1}{2} \right)\bigg) \\
	&\ge \frac{1}{2}  \Prob_{H_N \sim P^N_{\nu,\delta_0} }\bigg( \left| \widehat{\eta}(H_{N},f_{-1}) - \pi_\gamma f_{-1} \right| > \frac{1}{2}
	\vee \left| \widehat{\eta}(H_{N},f_{1}) - \pi_\gamma f_1 \right| > \frac{1}{2} \bigg) \\
	& \ge \frac{1}{2} \Prob_{H_N \sim P^N_{\nu,\delta_0} }  \left(  \widehat{\eta}(H_{N},f_{-1}) =  \widehat{\eta}(H_{N},f_{1})\right) \\
	&\ge \frac{1}{2}  \Prob_{H_N \sim P^N_{\nu,\delta_0}} \left( \forall t \in \{0,\dots, N-1\}\,:\, X_{t} = B \right) = \frac{1}{4} \left( \frac{1+\beta}{2} \right)^{N-1}.
	\end{align*}
\end{proof}
The latter result implies that, when $\beta=1$ and the chain never mixes, we have
 $$\inf_{\vrho, \widehat{\eta}} \sup_{\substack{P,\nu,f \\ \text{with spectral gap $0$}}} \Prob_{H_N \sim P^N_{\nu,\delta_0} }\left( \left| \widehat{\eta}(H_{N},f) - \pi_\gamma f \right| > \frac{1}{2} \right) \ge \frac{1}{4},$$ 
showing that reset is actually necessary.

\subsection{Proofs of Section~\ref{sec:estimators}}\label{sec:apxProofs2}

\subsubsection{Fixed-Horizon Estimation Algorithms}

\paragraph{Bias Analysis}

\bias*

\begin{proof}
	Let us start from the bias of the FHN estimator. We proceed as follows, with $t_0 \in \mathbb{N}$:
	\begin{align}
		\Bias_{H_N \sim P_{\nu,\vrho}^N}[\widehat{\eta}_{\text{FHN}} (H_N, f)]
		&= \left| \E_{H_N \sim P_{\nu, \rho}^N} \left[  \widehat{\eta}_{\text{FHN}} (H_N, f) \right] - \pi_\gamma f \right| \\
		&= (1 - \gamma) \left| \sum_{t = 0}^{\horizon - 1} \gamma^t \E_{X \sim \nu P^t} [ f (X) ] - \sum_{t = 0}^{+\infty} \gamma^t \E_{X \sim \nu P^t} [f (X)] \right| \label{eq:fhn1} \\
		&\leq (1 - \gamma) \gamma^\horizon \Bigg( \sum_{t = 0}^{t_0} \gamma^t \left| \E_{X \sim \nu P^{t + \horizon} } [ f (X)] \right| \label{eq:fhn2} \\
		&\quad+ \sum_{t = t_0}^{+\infty}\gamma^t \left| \E_{X \sim \nu P^{t + \horizon}} [f(X)] - \E_{X \sim \pi} [f (X)] \right| + \sum_{t = t_0}^{+\infty} \gamma^t \left| \E_{X \sim \pi} [f(X)] \right| \Bigg) \\
		&\leq (1 - \gamma) \gamma^\horizon \left( \frac{1 - \gamma^{t_0}}{1 - \gamma} + \sqrt{\chi_2 (\nu \| \pi) \sigma^2 f} \sum_{t = t_0}^{+\infty} \gamma^t \beta^{t + \horizon} + \frac{\gamma^{t_0}}{1 - \gamma}  \right) \label{eq:fhn3} \\
		&\leq (1 - \gamma) \gamma^\horizon \left( \frac{ 1 }{1 - \gamma} + \sqrt{\chi_2 (\nu \| \pi) \sigma^2 f} \frac{(\beta\gamma)^{t_0} \beta^T}{1 - \beta\gamma} \right) \leq \gamma^\horizon, \label{eq:fhn4}
	\end{align}
	where we obtain~\eqref{eq:fhn2} from~\eqref{eq:fhn1} by first collecting $\gamma^\horizon$ from the summation, summing and subtracting the term $\sum_{t = t_0}^{+\infty} \gamma^t \E_{X \sim \pi} [f(X)]$, and then applying the triangle inequality, we employ Lemma~\ref{lemma:t_stationary_inequality} to write \eqref{eq:fhn3}, and we let $t_0 \to \infty$ to tighten the bound and obtain the last inequality in~\eqref{eq:fhn4}. Let us now move to the FHC estimator. We consider a similar derivation with $s_0 \in \mathbb{N} $ and $t_0 \in \dsb{0,T}$:
	\begin{align}
	\Bias_{H_N \sim P_{\nu,\vrho}^N}&[\widehat{\eta}_{\text{FHC}} (H_N, f)] = \left|\frac{1-\gamma}{1-\gamma^T} \sum_{t=0}^{T-1} \gamma^t \E_{X \sim \nu P^t}[f(X)] - (1-\gamma) \sum_{t=0}^{+\infty} \gamma^t \E_{X \sim \nu P^t}[f(X)] \right| \\
    & = (1-\gamma) \gamma^T \left|\sum_{t=0}^{T-1}  \frac{\gamma^{t}}{1-\gamma^T} \E_{X \sim \nu P^t}[f(X)] -  \sum_{t=0}^{+\infty} \gamma^t \E_{X \sim \nu P^{t+T}}[f(X)]\right| \\
    & \le (1-\gamma) \gamma^T \Bigg(\sum_{t=0}^{t_0-1} \gamma^t + \sum_{t=t_0}^{T-1}  \frac{\gamma^{t}}{1-\gamma^T} \left| \E_{X \sim \nu P^t}[f(X)] -\E_{X \sim \pi}[f(X)] \right| \\
    & \quad + \sum_{t=0}^{s_0-1} \gamma^t + \sum_{t=s_0}^{+\infty} \gamma^t \left| \E_{X \sim \nu P^{t+T}}[f(X)] - \E_{X \sim \pi}[f(X)] \right|\Bigg) \label{line:901}\\
    & \le  (1-\gamma) \gamma^T \left( \frac{2-\gamma^{t_0}-\gamma^{s_0}}{1-\gamma} + \left( \frac{1}{1-\gamma^T} \sum_{t=t_0}^{T-1}  \gamma^t \beta^t +  \sum_{t=s_0}^{+\infty} \gamma^t \beta^{t+T}\right) \sqrt{\chi_2(\nu \| \pi) \sigma^2 f} \right)\label{line:902}\\
    & \le  (1-\gamma) \gamma^T \left( \frac{2-\gamma^{t_0}-\gamma^{s_0}}{1-\gamma} + \left( \frac{(\beta\gamma)^{t_0}(1-(\beta\gamma)^{T-t_0})}{(1-\gamma^T)(1-\beta\gamma)} + \frac{(\beta\gamma)^{s_0}\beta^{T}}{1-\beta\gamma}\right) \sqrt{\chi_2(\nu \| \pi) \sigma^2 f} \right),
\end{align}
where line~\eqref{line:901} follows from triangle inequality and line~\eqref{line:902} is obtained by applying Lemma~\ref{lemma:t_stationary_inequality}. The result is obtained by making explicit the minimization over $t_0$ and $s_0$.
\end{proof}

\biasCoroll*
\begin{proof}
	We start considering the case $\beta=0$. Since $T \ge 1$, we distinguish between the case in which the optimal value of $t_0$ is $0$ or grater than $0$. Instead, for $s_0$ it is always convenient to select $s_0=0$. If $t_0=0$, the bias bound becomes:
	\begin{align*}
		b_{0,\gamma, T}^{\text{FHC}} \rvert_{t_0=0} = (1-\gamma) \gamma^T \cdot \frac{1}{1-\gamma^T} \sqrt{\chi_2(\nu \| \pi) \sigma^2 f}.
	\end{align*}
	If instead, we select $t_0 > 0$, we get:
	\begin{align*}
		b_{0,\gamma, T}^{\text{FHC}} \rvert_{t_0>0} = (1-\gamma) \gamma^T \cdot \frac{1-\gamma^{t_0}}{1-\gamma} = \gamma^T (1-\gamma^{t_0}),
	\end{align*}
	that is minimized by selecting $t_0=1$.
	Thus, putting all together, we obtain the minimum between the two expression, whose value depend on the entity of the term $\sqrt{\chi_2(\nu \| \pi) \sigma^2 f}$. Let us move to the case $\beta=1$. Now, the bias bound becomes:
	\begin{align*}
	b_{1,\gamma, T}^{\text{FHC}} & = (1-\gamma)  \gamma^T \left( \frac{2-\gamma^{t_0}-\gamma^{s_0}}{1-\gamma} + \left( \frac{\gamma^{t_0}(1-\gamma^{T-t_0})}{(1-\gamma^T)(1-\gamma)} + \frac{\gamma^{s_0}}{1-\gamma}\right) \sqrt{\chi_2(\nu \| \pi) \sigma^2 f} \right)  \\
	& = (1-\gamma) \gamma^T \underbrace{\left( \frac{1-\gamma^{t_0}}{1-\gamma} + \frac{\gamma^{t_0}(1-\gamma^{T-t_0})}{(1-\gamma^T)(1-\gamma)}\sqrt{\chi_2(\nu \| \pi) \sigma^2 f}\right)}_{f(t_0)}\\
	& \quad  + (1-\gamma) \gamma^T \underbrace{\left( \frac{1-\gamma^{s_0}}{1-\gamma} + \frac{\gamma^{s_0}}{1-\gamma}\sqrt{\chi_2(\nu \| \pi) \sigma^2 f}\right)}_{g(s_0)}.
	\end{align*}
	We proceed in a separate way for $t_0$ and $s_0$, as they can be optimized independently. Let us start with $t_0$:
	\begin{align*}
	f(t_0)= \left( \frac{1-\gamma^{t_0}}{1-\gamma} + \frac{\gamma^{t_0}(1-\gamma^{T-t_0})}{(1-\gamma^T)(1-\gamma)}\sqrt{\chi_2(\nu \| \pi) \sigma^2 f}\right).
	\end{align*}
	It is simple to see, by renaming $x \coloneqq \gamma^{t_0}$, that $f(x)$ has no stationary points. Therefore, the optimum must be in the extreme points $t_0 \in \{0,T\}$:
	\begin{align*}
		& f(0) = \frac{1}{1-\gamma} \sqrt{\chi_2(\nu \| \pi) \sigma^2 f},\\
		& f(T) = \frac{1-\gamma^T}{1-\gamma}.
	\end{align*}
	Let us now move considering $s_0$:
	\begin{align*}
		g(s_0) =  \frac{1-\gamma^{s_0}}{1-\gamma} + \frac{\gamma^{s_0}}{1-\gamma}\sqrt{\chi_2(\nu \| \pi) \sigma^2 f}.
	\end{align*}
	Similarly to the previous case, function $g$ admits no stationary points and, thus, we consider the extreme values:
	\begin{align*}
		& g(0) = \frac{1}{1-\gamma} \sqrt{\chi_2(\nu \| \pi) \sigma^2 f}, \\
		& g(+\infty) = \frac{1}{1-\gamma}.
	\end{align*}
	Putting all together, we obtain:
	\begin{align*}
	b_{1,\gamma, T}^{\text{FHC}} & = (1-\gamma) \gamma^T \left( \min\{f(0),f(T)\} + \min\{g(0),g(+\infty)\} \right) \\
	& = \gamma^T \left(\min \left\{\sqrt{\chi_2(\nu \| \pi) \sigma^2 f}, 1-\gamma^T \right\} + \left\{\sqrt{\chi_2(\nu \| \pi) \sigma^2 f}, 1 \right\} \right) \\
	& \le 2 \gamma^T \min \left\{\sqrt{\chi_2(\nu \| \pi) \sigma^2 f}, 1 \right\},
	\end{align*}
	where the last inequality is to obtain a more interpretable expression.
\end{proof}

\paragraph{Concentration}

\concFH*

\begin{proof}
We provide a derivation that holds for both the FHN and the FHC estimators. Specifically, we consider a constant $c_{\text{FH}\star}$ with $\star \in \{\text{N,C}\}$, that is differently defined for the FHN and FHC estimators as follows:
\begin{align*}
	c_{\text{FH}\star} = \begin{cases}
							1-\gamma & \text{if } \star = \text{N} \\
\frac{1-\gamma}{1-\gamma^T} & \text{if } \star = \text{C}
							\end{cases}.
\end{align*}
Let us consider the moment--generating function for $t \in \mathbb{R}$:
	\begin{align*}
	\E\left[ \exp \left(t \left(c_{\text{FH}\star} \sum_{i=0}^{M-1} \sum_{j=0}^{T-1} \gamma^j f(X_{Ti+j}) - \pi_\gamma f\right)  \right) \right]  & = \E\left[ \prod_{i=0}^{M-1} \exp \left(t \left(c_{\text{FH}\star} \sum_{j=0}^{T-1} \gamma^j f(X_{Ti+j}) - \pi_\gamma f\right) \right) \right]\\
	& = \prod_{i=0}^{M-1} \E\left[  \exp \left(t \left(c_{\text{FH}\star} \sum_{j=0}^{T-1} \gamma^j f(X_{Ti+j}) - \pi_\gamma f\right) \right) \right] \\
	& = \E\left[  \exp \left(t \left(c_{\text{FH}\star} \sum_{j=0}^{T-1} \gamma^j f(X_{j}) - \pi_\gamma f \right) \right) \right]^M,
	\end{align*}
	where the last but one equality follows from the fact that each trajectory is independent from the others, since the reset is based on the horizon only, and the last inequality is obtained by observing that the trajectories are identically distributed. Let us now focus on one trajectory only and we highlight a bias term:
	\begin{align*}
	 \E\left[  \exp \left(t \left(c_{\text{FH}\star} \sum_{j=0}^{T-1} \gamma^j f(X_{j}) - \pi_\gamma f \right) \right) \right] & = \E\left[  \exp \left(t c_{\text{FH}\star} \sum_{j=0}^{T-1} \gamma^j (f(X_{j}) - \nu P^j f)  \right) \right] \\
	 & \qquad \times \exp \left( t \left( c_{\text{FH}\star} \sum_{j=0}^{T-1} \gamma^j  \nu P^j f - \pi_\gamma f  \right)\right) \\
	 & \le \E\left[  \exp \left(t c_{\text{FH}\star} \sum_{j=0}^{T-1} \gamma^j (f(X_{j}) - \nu P^j f)  \right) \right] \exp \left( t b_{\beta,\gamma,T}^{\text{FH}\star}\right),
	\end{align*}
	having observed that the last term corresponds to the actual bias, as bounded in Proposition~\ref{prop:biasFH}. 
	Focusing on the first term, we rename $\widetilde{f}(X_j) \coloneqq f(X_{j}) - \nu P^j f$, let $j_0 \in \dsb{0,T}$ and we apply H\"older's inequality with exponent $q \in [1,+\infty]$:
	\begin{align*}
	\E\left[ \exp \left(t c_{\text{FH}\star} \sum_{j=0}^{T-1} \gamma^j \widetilde{f}(X_j)  \right) \right] & = \E\left[ \exp \left(t c_{\text{FH}\star} \sum_{j=0}^{j_0-1} \gamma^j \widetilde{f}(X_j)  \right) \exp \left(t c_{\text{FH}\star}  \sum_{j=j_0}^{T-1} \gamma^j \widetilde{f}(X_j)  \right)\right]\\
	& \le \underbrace{\E\left[ \exp \left(t c_{\text{FH}\star} q \sum_{j=0}^{j_0-1} \gamma^j \widetilde{f}(X_j)  \right) \right]^{\frac{1}{q}}}_{\text{(a)}} \underbrace{\E\left[ \exp \left( \frac{tc_{\text{FH}\star}q}{q-1} \sum_{j=j_0+1}^{T-1} \gamma^j \widetilde{f}(X_j)  \right) \right]^{\frac{q-1}{q}}}_{\text{(b)}}.
	\end{align*}
	Let us focus on term (a), we look at the quantity $c_{\text{FH}\star}  q \sum_{j=0}^{j_0-1} \gamma^j \widetilde{f}(X_j) $ as a unique random variable whose range is $4 c_{\text{FH}\star}  q (1-\gamma^{j_0})/(1-\gamma)$ as $|\widetilde{f}(X_j)| \le 1$. Thus, by H\"oeffding's lemma, having observed that the terms $ \widetilde{f}(X_j)$ are zero-mean random variables:
	\begin{align*}
		\text{(a)} = \E\left[ \exp \left(t c_{\text{FH}\star} q \sum_{j=0}^{j_0-1} \gamma^j \widetilde{f}(X_j)  \right) \right]^{\frac{1}{q}} \le \exp \left( \frac{2t^2 c_{\text{FH}\star}^2 q (1-\gamma^{j_0})^2}{(1-\gamma)^2} \right).
	\end{align*}
	Let us move to term (b). Here, we need to highlight a further bias term:
	\begin{align*}
		\text{(b)} & = \E\left[ \exp \left( \frac{tc_{\text{FH}\star}q}{q-1} \sum_{j=j_0+1}^{T-1} \gamma^j \widetilde{f}(X_j)  \right) \right]^{\frac{q-1}{q}} \\
		& = \E\left[ \exp \left( \frac{tc_{\text{FH}\star}q}{q-1} \sum_{j=j_0+1}^{T-1} \gamma^j (f(X_j) - \nu P^j f)  \right) \right]^{\frac{q-1}{q}} \\
		& = \underbrace{\E\left[ \exp \left( \frac{tc_{\text{FH}\star}q}{q-1} \sum_{j=j_0+1}^{T-1} \gamma^j (f(X_j) - \pi f)  \right) \right]^{\frac{q-1}{q}}}_{\text{(c)}} \exp \left( tc_{\text{FH}\star} \sum_{j=j_0+1}^{T-1} \gamma^j (\pi f - \nu P^j f)  \right).
	\end{align*}
	Now, we focus on term (c) and proceed with a change of measure followed by an application of H\"older's inequality with exponent $r \in [1,+\infty]$:
	\begin{align*}
	\text{(c)}  & = \E\left[ \exp \left( \frac{tc_{\text{FH}\star}q}{q-1} \sum_{j=j_0+1}^{T-1} \gamma^j (f(X_j) - \pi f)  \right) \right]^{\frac{q-1}{q}}  \\
	& = \E_{\pi}\left[ \frac{\nu P^{j_0}(X_{j_0})}{\pi(X_{j_0})} \exp \left( \frac{tc_{\text{FH}\star}q}{q-1} \sum_{j=j_0+1}^{T-1} \gamma^j (f(X_j) - \pi f)  \right) \right]^{\frac{q-1}{q}} \\
	&  \le\E_{\pi}\left[ \left( \frac{\nu P^{j_0}(X_{j_0})}{\pi(X_{j_0})} \right)^r \right]^{\frac{q-1}{rq}} \underbrace{\E_{\pi}\left[ \exp \left( \frac{tc_{\text{FH}\star}qr}{(q-1)(r-1)} \sum_{j=j_0+1}^{T-1} \gamma^j (f(X_j) - \pi f)  \right) \right]^{\frac{(q-1)(r-1)}{qr}}}_{\text{(d)}}.
	\end{align*}
	Then, we consider term (e) and apply Theorem 1 of~\cite{fan2021hoeffding} to bound the moment generating function, recalling that $1-\beta$ is the absolute spectral gap:
	\begin{align*}
	\text{(d)} & = \E_{\pi}\left[ \exp \left( \frac{t c_{\text{FH}\star}  qr}{(q-1)(r-1)} \sum_{j=j_0+1}^{T-1} \gamma^j (f(X_j) - \pi f)  \right) \right]^{\frac{(q-1)(r-1)}{qr}} \\
	& \le \exp \left( \frac{2 t^2 c_{\text{FH}\star}^2 qr}{(q-1)(r-1)} \cdot \frac{1+\beta}{1-\beta} \cdot \frac{\gamma^{2j_0} - \gamma^{2T}}{1-\gamma^2} \right).
	\end{align*}
Concerning the second bias term, we can provide a bound by exploiting Lemma~\ref{lemma:t_stationary_inequality}:
\begin{align*}
\sum_{j=j_0+1}^{T-1} \gamma^j (\pi f - \nu P^j f) \le \sum_{j=j_0+1}^{T-1}  (\beta\gamma)^j \sqrt{\chi_2(\nu\|\pi) \sigma^2 f} = \frac{(\beta\gamma)^{j_0} - (\beta\gamma)^T}{1-\beta\gamma} \sqrt{\chi_2(\nu\|\pi) \sigma^2 f} \coloneqq c_{\beta,\gamma}(j_0).
\end{align*}

Putting all together and by minimizing over $t$, we have, for $\star \in \{\text{N,C}\}$:
\begin{align*}
	\Pr_{H_N \sim P_{\nu,\vrho}^{N}} & \left( \left| \widehat{\eta}_{\text{FH}\star}(H_N,f) - \pi_\gamma f \right|  > \epsilon\right) \le 2 \left\| \frac{\nu P^{j_0}}{\pi} \right\|_{r,\pi}^{\frac{q-1}{q}}\\
	& \quad \times \min_{t \ge 0} \exp \Bigg(- t\left( \epsilon - b_{\beta,\gamma,T}^{\text{FH}\star} - c_{\text{FH}\star} c_{\beta,\gamma}(j_0)\right)\\
	& \qquad + 2t^2c_{\text{FH}\star}^2 q \left( \frac{(1-\gamma^{j_0})^2}{(1-\gamma)^2} + \frac{r}{(q-1)(r-1)} \cdot \frac{1+\beta}{1-\beta} \cdot \frac{\gamma^{2j_0}-\gamma^{2T}}{1-\gamma^2} \Bigg)
	 \right)^M\\
	 & =  2 \left\| \frac{\nu P^{j_0}}{\pi} \right\|_{r,\pi}^{\frac{q-1}{q}} \exp\left( - \frac{\left( \epsilon  - b_{\beta,\gamma,T}^{\text{FH}\star} - c_{\text{FH}\star} c_{\beta,\gamma}(j_0)\right)^2M}{8c_{\text{FH}\star}^2 q \left( \frac{(1-\gamma^{j_0})^2}{(1-\gamma)^2} + \frac{r}{(q-1)(r-1)} \cdot \frac{1+\beta}{1-\beta} \cdot \frac{\gamma^{2j_0}-\gamma^{2T}}{1-\gamma^2}\right)}	 \right).
\end{align*}
By solving for $\epsilon$, and minimizing over the free parameters $q$, $r$, and $j_0$, we obtain that with probability at least $1-\delta$ it holds that:
\begin{align*}
\left| \widehat{\eta}_{\text{FH}\star}(H_N,f) - \pi_\gamma f \right| & \le b_{\beta,\gamma,T}^{\text{FH}\star} +c_{\text{FH}\star} \min_{j_0 \in \dsb{0,T},q,r \ge 1}   \Bigg\{ c_{\beta,\gamma}(j_0) \\
& \quad +  \sqrt{\frac{8 q}{M} \left( \frac{(1-\gamma^{j_0})^2}{(1-\gamma)^2} + \frac{r}{(q-1)(r-1)} \cdot \frac{1+\beta}{1-\beta} \cdot \frac{\gamma^{2j_0}-\gamma^{2T}}{1-\gamma^2}\right)} \Bigg\}\\ 
& \quad \times \sqrt{ \left( \log \frac{2}{\delta} + \log  \left\| \frac{\nu P^{j_0}}{\pi} \right\|_{r,\pi}^{\frac{q-1}{q}} \right)}.
\end{align*}
To obtain the theorem statement, we set $q=r=2$, to get:
\begin{align*}
\left| \widehat{\eta}_{\text{FH}\star}(H_N,f) - \pi_\gamma f \right| & \le b_{\beta,\gamma,T}^{\text{FH}\star} + c_{\text{FH}\star} \min_{j_0 \in \dsb{0,T}}   c_{\beta,\gamma}(j_0) +  \sqrt{\frac{16}{M} \left( \frac{(1-\gamma^{j_0})^2}{(1-\gamma)^2} + 2 \cdot \frac{1+\beta}{1-\beta} \cdot \frac{\gamma^{2j_0}-\gamma^{2T}}{1-\gamma^2}\right)}\\ 
& \quad \times \sqrt{ \left( \log \frac{2}{\delta} + \frac{1}{4} \log  \left\| \frac{\nu P^{j_0}}{\pi} \right\|_{2,\pi}^2 \right)} \\
& \le b_{\beta,\gamma,T}^{\text{FH}\star} + c_{\text{FH}\star} \min_{j_0 \in \dsb{0,T}}  c_{\beta,\gamma}(j_0) +  \sqrt{ \left( \frac{(1-\gamma^{j_0})^2}{(1-\gamma)^2} + \frac{1+\beta}{1-\beta} \cdot \frac{\gamma^{2j_0}-\gamma^{2T}}{1-\gamma^2}\right)}\\ 
& \quad \times \sqrt{ \frac{32}{M}\left( \log \frac{2}{\delta} + \frac{1}{4} \log \left(\chi_2(\nu P^{j_0}\| \pi) + 1\right) \right)},
\end{align*}
the statement is obtained by observing that $M = T/N$.
	\end{proof}

\concFHcoroll*
\begin{proof}
	We start with the case $\beta=1$. From Theorem~\ref{thr:concFH}, we immediately observe that we need to select $j_0=T$, otherwise the concentration bound degenerates to infinity. Moreover, we make use of the bias bounds of Corollary~\ref{coroll:FH}. Thus, ignoring the term $\chi_2(\nu\|\pi)\sigma^2f$, we obtain the $O$ expression provided in the statement of the corollary.
	Concerning the case $\beta=0$, instead, we need some additional care. First of all, we observe that $c_{0,\gamma}(j_0) = 0$ for every $j_0 > 0$, and $c_{0,\gamma}(0) = \sqrt{\chi_2(\nu\|\pi)\sigma^2f}$. Thus, we can ignore this term. Then, it is simple to verify that the expression $d_{0,\gamma}(j_0)$, ignoring the dependence on $\chi_2(\nu\|\pi)\sigma^2f$ again, is given by:
	\begin{align*}
		d_{0,\gamma}(j_0) = O \left( \sqrt{\frac{(1-\gamma^{j_0})^2}{(1-\gamma)^2} + \frac{\gamma^{2j_0}- \gamma^{2T}}{1-\gamma^2}} \sqrt{\frac{T \log \frac{2}{\delta}}{N}}\right).
	\end{align*}
	By vanishing the derivative, we obtain the value of $j_0$ that is minimizing the expression, \ie $j_0 = \frac{\log ((1+\gamma)/2)}{\log \gamma}$. This quantity is in the interval $[0,1/2]$ varying $\gamma \in [0,1]$. Consequently, as $j_0$ must be integer, we select $j_0 = 0$, to get the expression shown in the corollary statement.
\end{proof}

\subsubsection{Adaptive-Horizon Estimation Algorithms}

\paragraph{Computational Analysis}

\ahEstimatorComp*
\begin{proof}
	To characterize the number of trajectories, we consider the sampling process, in which, at every step $t$, we sample independently a Bernoulli random variable to decide whether to reset:
	\begin{align*}
		Y_t \sim \rho_t^{\text{AHR}}(H_t,X_t) = \mathrm{Ber}(1-\gamma).
	\end{align*}
	We have already observed that the number of trajectories can be computed as $M = 1+\sum_{t=0}^{N-1} Y_i$. Consequently, we have that $M-1$ is the sum of $N$ independent Bernoulli random variables, being a binomial random variable $\mathrm{Bin}(N-1,1-\gamma)$. From the properties of the binomial random variable, we have that $\E[M-1] = (N-1)(1-\gamma)$.
	
	To analyze the time complexity, we need to characterize the distribution of the maximum length among the trajectories, \ie $T_{\max} = \max_{i \in \dsb{M}} T_i$. Each $T_i$ can be looked as derived from a geometric distribution as $T_i-1 \sim \mathrm{Geo}(1-\gamma)$. Unfortunately, these random variables are dependent (but identically distributed) since the process stops as soon as the have run out of budget. To this end, we will proceed as follows, being $k \in \mathbb{N}$:
	\begin{align*}
		\Pr\left( \max_{i \in \dsb{M}} T_{i-1} > k\right) & = \E \left[ \indic\left\{\max_{i \in \dsb{M}} T_{i-1} > k \right\} \right] \\
		& = \E \left[ \sum_{m=1}^N \indic\left\{M=m \right\} \indic\left\{\max_{i \in \dsb{M}} T_{i-1} > k \right\} \right]  \\
		& \le  \sum_{m=1}^N  \E \left[\indic\left\{\max_{i \in \dsb{m}} T_{i-1} > k \right\} \right] \\
		& =  \sum_{m=1}^N \Pr \left( \max_{i \in \dsb{m}} T_{i-1} > k \right).
	\end{align*}
	Now, we consider one term at a time and perform a union bound:
	\begin{align*}
	\Pr \left( \max_{i \in \dsb{m}} T_{i-1} > k \right) & = \Pr \left( \bigvee_{i \in \dsb{m}} T_{i-1} > k \right) \\
	& \le \sum_{i \in \dsb{m}} \Pr\left(T_{i-1} > k  \right) \\
	& = m \Pr\left(T_{0} > k  \right),
	\end{align*}
	where the last equality follows from the fact that the random variables $T_i$ are identically distributed. Since $T_0 - 1 $ is a geometric distributions, we have that $\Pr\left(T_{0} \ge k  \right) = \gamma^{k-1}$. Thus, putting all together, we obtain:
	\begin{align*}
	\sum_{m=1}^N \Pr \left( \max_{i \in \dsb{m}} T_{i-1} > k \right) = \sum_{m=1}^N m \gamma^{k-1} = \frac{N(N+1)}{2} \gamma^{k-1}.
	\end{align*}
	Solving to obtain $k$, we have that with probability at least $1-\delta$ it holds that:
	\begin{align*}
	k \le 1+\frac{\log\frac{N(N+1)}{2\delta}}{\log \frac{1}{\gamma}} \le 1+\frac{\log \frac{2N^2}{2\delta}}{1-\gamma},
	\end{align*}
	having observed that $N+1 \le N$ and $\frac{1}{\log\frac{1}{\gamma}} \le \frac{1}{1-\gamma}$. By taking the minimum with the number of samples $N$, we get the result.
\end{proof}

\paragraph{Statistical Analysis}

\osEstimatorConc*

\begin{proof}
	Suppose that the number of trajectories $M-1=m-1$ is fixed. In this case, we can apply H\"oeffding's inequality to the estimator:\footnote{Note that conditioning to $M=m$ is allowed as the decision to reset is independent on the values of $f(X_t)$ but depends on an independent trial $Y_t \sim \mathrm{Ber}(1-\gamma)$ at each step.}
	\begin{align*}
		\Prob_{H_N \sim P_{\nu,{\vrho}^{\text{AHR}}}^N} \left( \left| \widehat{\eta}_{\text{OS}}(H_N,f) - \pi_\gamma f \right| > \epsilon | M = m\right) \le 2 \exp \left( - \frac{\epsilon^2 (m-1)}{2} \right).
	\end{align*}
	Now, we take the expectation \wrt to the distribution of $M$ that is a binomial distribution:
	\begin{align*}
		\Prob_{H_N \sim P_{\nu,{\vrho}^{\text{AHR}}}^N} \left( \left| \widehat{\eta}_{\text{OS}}(H_N,f) - \pi_\gamma f \right| > \epsilon\right) & = \E_{m \sim \mathrm{Bin}(N,1-\gamma)} \left[ \Prob_{H_N \sim P_{\nu,{\vrho}^{\text{AHR}}}^N} \left( \left| \widehat{\eta}_{\text{OS}}(H_N,f) - \pi_\gamma f \right| > \epsilon | M = m\right) \right] \\
		& \le \E_{m \sim \mathrm{Bin}(N,1-\gamma)} \left[ 2 \exp \left( - \frac{\epsilon^2 m}{2} \right) \right] \\
		& = 2 \sum_{m=0}^N {N \choose m} (1-\gamma)^m \gamma^{N-m} \exp \left( - \frac{\epsilon^2 m}{2} \right) \\
		& = 2 \left( \gamma + (1-\gamma) \exp\left( -\frac{\epsilon^2}{2}\right)\right)^N.
	\end{align*}
	 We now provide a looser but more interpretable bound. To this end, we consider the derivation, holding for $\epsilon\in [0,1]$ (since $f(x) \in [0,1]$ for all $x \in \Xs$):
	\begin{align*}
	\frac{\gamma + (1-\gamma) \exp\left( -\frac{\epsilon^2}{2}\right)}{\exp\left( - \frac{\epsilon^2(1-\gamma)}{2} \right)} & = \gamma \exp \left(\frac{\epsilon^2(1-\gamma)}{2} \right) + (1-\gamma) \exp \left( -\frac{\epsilon^2 \gamma}{2} \right) \\
	& \begin{cases}
		\le 1+e \\
		\ge \gamma +(1-\gamma) \exp(-\gamma) \ge \exp(-\gamma) \ge e^{-1}
	\end{cases}
	\end{align*}
	Thus, we have that with probability at least $1-\delta$ it holds that:
	\begin{align*}
	\left| \widehat{\eta}_{\text{OS}}(H_N,f) - \pi_\gamma f \right| \le \sqrt{\frac{2 \log \frac{2(1+e)}{\delta}}{N(1-\gamma)}}.
	\end{align*}
	The result is obtained by observing that $2(1+e) < 8$.
\end{proof}

\asEstimatorConc*

\begin{proof}
	We start by working on the moment-generating function. Let $t \in \Reals$. Let us consider $i_0 \in \dsb{0,N}$:
	\begin{align*}
		\E\left[ \exp \left( t \sum_{i=0}^{N-1} (f(X_i) - \pi_\gamma f ) \right)\right] & = \E\left[ \exp \left( t \sum_{i=0}^{i_0-1} (f(X_i) - \pi_\gamma f ) \right) \exp \left( t \sum_{i=i_0}^{N-1} (f(X_i) - \pi_\gamma f )\right) \right] \\
		& \le  \exp \left( i_0 t\right)  \E\left[  \exp \left( t \sum_{i=i_0}^{N-1} (f(X_i) - \pi_\gamma f ) \right)\right],
	\end{align*}
	where we exploited the inequality $|f(X_i) - \pi_\gamma f| \le 1$. We now move to bound the second term, by exploiting a change of measure argument and H\"older's inequality with $q \in [1,+\infty]$:
	\begin{align*}
\E\left[  \exp \left( t \sum_{i=i_0}^{N-1} (f(X_i) - \pi_\gamma f ) \right)\right] & = \E_{\pi_\gamma} \left[ \frac{\nu P^{i_0}_\gamma({X_{i_0}})}{\pi_\gamma({X_{i_0}})}  \exp \left( t \sum_{i=i_0}^{N-1} (f(X_i) - \pi_\gamma f ) \right)\right] \\
& \le  \E_{\pi_\gamma} \left[ \left(\frac{\nu P^{i_0}_\gamma({X_{i_0}})}{\pi_\gamma({X_{i_0}})} \right)^q\right]^{\frac{1}{q}} \E_{\pi_\gamma} \left[  \exp \left(\frac{ t q}{q-1}\sum_{i=i_0}^{N-1} (f(X_i) - \pi_\gamma f ) \right)\right]^{\frac{q-1}{q}} \\
& = \left\|\frac{\nu P^{i_0}_\gamma}{\pi_\gamma} \right\|_{\pi_\gamma,q} \E_{\pi_\gamma} \left[  \exp \left(\frac{ t q}{q-1}\sum_{i=i_0}^{N-1} (f(X_i) - \pi_\gamma f ) \right)\right]^{\frac{q-1}{q}}
	\end{align*}
Now, we exploit Lemma~\ref{lemma:discounted_kernel} to derive that the absolute spectral gap of $P_\gamma$ is $1-\beta\gamma$, being $\beta\gamma$ the second eigenvalue of operator $P_\gamma$. To bound the expectation in the previous equation, we exploit Theorem 1 of~\cite{fan2021hoeffding}:
	\begin{align*}
	\E_{ \pi_\gamma} \left[  \exp \left(\frac{ t q}{q-1}\sum_{i=i_0}^{N-1} (f(X_i) - \pi_\gamma f ) \right)\right]^{\frac{q-1}{q}} \le \exp \left( \frac{2t^2q}{q-1} \cdot (N-i_0) \cdot \frac{1+\beta\gamma}{1-\beta\gamma} \right).
	\end{align*}
	We can now proceed to bound the probability, by minimizing over $t \ge 0$:
	\begin{align*}
		\Pr_{H_N \sim P_{\nu,\vrho}^N} \left( |\widehat{\eta}_{\text{AS}}(H_N,f) - \pi_\gamma f| > \epsilon \right) & \le 	2\left\|\frac{\nu P^{i_0}_\gamma}{\pi_\gamma} \right\|_{\pi_\gamma, q}
		\min_{t \ge 0} \exp \left( - t (\epsilon N -i_0) + \frac{2t^2q}{q-1} \cdot (N-i_0) \cdot \frac{1+\beta\gamma}{1-\beta\gamma} \right) \\
		& = 	2\left\|\frac{\nu P^{i_0}_\gamma}{\pi_\gamma} \right\|_{\pi_\gamma, q}\exp \left( - \frac{(\epsilon N - i_0)^2}{\frac{2q}{q-1} \cdot (N-i_0) \cdot \frac{1+\beta\gamma}{1-\beta\gamma}} \right) \\
		& = 	2\left\|\frac{\nu P^{i_0}_\gamma}{\pi_\gamma} \right\|_{\pi_\gamma,q} \exp \left( - (\epsilon N - i_0)^2\cdot \frac{q-1}{2q (N-i_0)}  \cdot \frac{1-\beta\gamma}{1+\beta\gamma}\right). 
	\end{align*}
By solving for $\epsilon$, and minimizing over the free parameters $q$ and $i_0$, we obtain that with probability at least $1-\delta$ it holds that:
	\begin{align*}
	\left| \widehat{\eta}_{\text{AS}}(H_N,f) - \pi_\gamma f\right| & \le \min_{i_0 \in \dsb{0,N}, q \ge 1} \frac{i_0}{N} + \sqrt{\frac{2q(1-i_0/N)}{N(q-1)} \cdot \frac{1+\beta\gamma}{1-\beta\gamma} \left(\log  \frac{2}{\delta} + \log \left\|\frac{\nu P^{i_0}_\gamma}{\pi_\gamma} \right\|_{\pi_\gamma,q} \right)}.
\end{align*}
Since the optimization is non-trivial, the result shown in the statement of the theorem is obtained by setting $q=2$ and $i_0=0$, observing that $\left\|\frac{\nu }{\pi_\gamma} \right\|_2 ^2 = \chi_2\left( \nu \| \pi_\gamma \right)+1$ and bounding $1+\beta\gamma \le 2$.
\end{proof}

\begin{prop}[AS Estimator - Bias] Let $H_N \sim P^N_{\nu,\textcolor{black}{\vrho}}$ with the reset policy $\rho_t^{\text{AHR}}(H_t,X_t) = \mathrm{Ber}(1-\gamma)$, and let $f : \Xs \to [0, 1]$. Then, it holds that:
\begin{align*}
	\Bias_{H_N \sim P_{\nu,\vrho}^N}[\widehat{\eta}_{\text{AS}} (H_N, f)] \le \frac{1-(\beta\gamma)^N}{N(1-\beta\gamma)} \sqrt{\chi_2(\nu\|\pi_\gamma) \sigma^2_\gamma f}.
\end{align*}
\end{prop}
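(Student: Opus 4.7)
The plan is to exploit the fact that, under the AHR reset policy, the sequence $(X_t)_{t \in \mathbb{N}}$ becomes a homogeneous Markov chain with kernel $P_\gamma = (1-\gamma)\mathbf{1}\nu + \gamma P$, whose invariant distribution is exactly $\pi_\gamma$. Concretely, each transition goes either to the initial distribution $\nu$ (with probability $1-\gamma$, when $Y_t=1$) or follows $P$ (with probability $\gamma$, when $Y_t=0$), which is precisely the kernel $P_\gamma$. So the marginal distribution of $X_t$ is $\nu P_\gamma^t$.

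Starting from the definition of bias, I would write
\begin{align*}
\Bias_{H_N \sim P_{\nu,\vrho}^N}[\widehat{\eta}_{\text{AS}}(H_N,f)]
&= \left|\E\left[\frac{1}{N}\sum_{t=0}^{N-1} f(X_t)\right] - \pi_\gamma f\right| \\
&= \left|\frac{1}{N}\sum_{t=0}^{N-1} \left(\nu P_\gamma^t f - \pi_\gamma f\right)\right| \\
&\le \frac{1}{N}\sum_{t=0}^{N-1} \left|\nu P_\gamma^t f - \pi_\gamma P_\gamma^t f\right|,
\end{align*}
where in the last step I used $\pi_\gamma = \pi_\gamma P_\gamma^t$ (stationarity) so that I can rewrite the difference as a difference of the same operator applied to two measures.

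To bound each term, I would apply Lemma~\ref{lemma:t_stationary_inequality} (the same $L_2$--contraction-type inequality used in the FH-bias proof) now relative to the kernel $P_\gamma$. By Lemma~\ref{lemma:discounted_kernel} (invoked in the AS concentration proof) the absolute spectral gap of $P_\gamma$ is $1-\beta\gamma$, so
\begin{equation*}
\left|\nu P_\gamma^t f - \pi_\gamma f\right| \le (\beta\gamma)^t \sqrt{\chi_2(\nu\|\pi_\gamma)\,\sigma_\gamma^2 f}.
\end{equation*}
Summing the geometric series $\sum_{t=0}^{N-1}(\beta\gamma)^t = (1-(\beta\gamma)^N)/(1-\beta\gamma)$ and dividing by $N$ yields exactly the claimed bound.

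The only mildly delicate point is making sure the spectral-gap transfer to $P_\gamma$ is clean and that the $\chi_2$-type inequality still applies with $\pi_\gamma$ playing the role of the invariant measure and $\sigma_\gamma^2 f$ playing the role of the variance; but both facts have already been established earlier in the appendix, so the proof reduces to a short and essentially mechanical computation. No optimization over auxiliary indices is needed here, contrary to the FH case, because a single straight application of the contraction inequality at every step suffices.
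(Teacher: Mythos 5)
Your proposal is correct and follows essentially the same route as the paper's proof: rewrite the bias as the average of $\nu P_\gamma^t f - \pi_\gamma f$, apply Lemma~\ref{lemma:t_stationary_inequality} to the kernel $P_\gamma$ whose second eigenvalue is $\beta\gamma$ (Lemma~\ref{lemma:discounted_kernel}), and sum the geometric series. The only difference is cosmetic: you make the triangle-inequality and stationarity steps explicit, whereas the paper folds them into a single application of the lemma.
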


\begin{proof}
	Let us consider the following derivation:
	\begin{align}
		   \Bias_{H_N \sim P_{\nu,\vrho}^N}[\widehat{\eta}_{\text{AS}} (H_N, f)]&  = \left|\frac{1}{N} \sum_{t=1}^{N} \E_{H_N \sim P_{\nu,\vrho}^N}[\widehat{\eta}_{\text{AS}} (H_N, f)] - \pi_\gamma f \right| \notag \\
		   & =  \left|\frac{1}{N} \sum_{t=1}^{N} \E_{X \sim \nu P_\gamma^t} \E[f(X)] - \pi_\gamma f \right|\notag \\ 
		   & \le \sqrt{\chi_2(\nu\|\pi_\gamma) \sigma^2_\gamma f} \frac{1}{N} \sum_{t=0}^{N-1} (\beta\gamma)^t \label{eq:-501}\\
    & =  \sqrt{\chi_2(\nu\|\pi_\gamma) \sigma^2_\gamma f}\frac{1-(\beta\gamma)^N}{N(1-\beta\gamma)}. \notag
	\end{align}
	where line~\eqref{eq:-501} follows from Lemma~\ref{lemma:t_stationary_inequality} an d recalling that the absolute spectral gap of $P_\gamma$ is $\beta\gamma$. 
\end{proof}

\subsubsection{About the Optimal Horizon $T$}\label{sec:magic3}
In this appendix, we elaborate on the choice of the horizon $T$ for the FH estimators in a way that it is independent on the mixing properties of the Markov chain. To this end, we consider the simplified expressions of the concentration bounds of Corollary~\ref{coroll:FH}.
Let us define:
\begin{align*}
	T^*_\gamma \coloneqq \frac{\log \sqrt{N}}{\log \frac{1}{\gamma}} .
\end{align*}

\paragraph{Finite Horizon Corrected Estimator}
For the FHC estimator we show that the choice of $T = T^*_\gamma$ makes the concentration rate nearly minimax optimal. Indeed, for $\beta = 0$, we have:
\begin{align*}
	\left| \widehat{\eta}_{\text{FHC}} (H_N, f) - \pi_\gamma f \right|  & \le O \left( (1-\gamma)\gamma^{T^*_\gamma} + \sqrt{\frac{{T^*_\gamma} (1-\gamma) \log \frac{2}{\delta}}{N(1-\gamma^{T^*_\gamma})}} \right) \\
	& = O \left(\frac{1-\gamma}{\sqrt{N}} + \sqrt{\frac{\log \sqrt{N} (1-\gamma) \log \frac{2}{\delta}}{N \log \frac{1}{\gamma} \left(1 - \frac{1}{\sqrt{N}}\right) }} \right) \\
	& \le O \left( \sqrt{\frac{\log N\log \frac{2}{\delta}}{N }} \right) \\
	& = \widetilde{O} \left( \frac{1}{\sqrt{N}}\right).
\end{align*}
having observed that $(1-\gamma) / \log(1/\gamma) \le 1$ and whenever $N \ge 2$. This concentration rate is indeed matching, in $\widetilde{O}$ sense, the minimax rate. We consider now $\beta = 1$. A similar derivation applies:
\begin{align*}
\left| \widehat{\eta}_{\text{FHC}} (H_N, f) - \pi_\gamma f \right|  & \le O \left( \gamma^{T^*_\gamma} + \sqrt{\frac{{T^*_\gamma}\log \frac{2}{\delta}}{N}} \right) \\
& \le O \left(\frac{1}{\sqrt{N}} + \sqrt{\frac{\log \sqrt{N} \log \frac{2}{\delta}}{N \log \frac{1}{\gamma}}} \right) \\
& \le O \left( \sqrt{\frac{\log N \log \frac{2}{\delta}}{2N(1-\gamma)}} \right) \\
& = \widetilde{O} \left(\frac{1}{\sqrt{N(1-\gamma)}} \right),
\end{align*}
having bounded $1/\log(1/\gamma) \le 1/(1-\gamma)$. This rate matches as well, in the $\widetilde{O}$ sense, the minimax concentration rate.

\paragraph{Finite Horizon Non-Corrected Estimator} The choice of $T = T^*_\gamma$ happens to make also the concentration rate of the FHN estimator nearly minimax optimal (in the $\widetilde{O}$ sense) for $\beta = 1$. Indeed,  we have:
\begin{align*}
	 \left| \widehat{\eta}_{\text{FHN}} (H_N, f) - \pi_\gamma f \right|  & \le O \left( \gamma^{T^*_\gamma} + \sqrt{\frac{{T^*_\gamma} \log \frac{2}{\delta}}{N}} \right) \\
	 & \le O \left( \frac{1}{\sqrt{N}} +  \sqrt{\frac{\log \sqrt{N} \log \frac{2}{\delta}}{N\log \frac{1}{\gamma}}} \right) \\
	 & \le O \left(\sqrt{\frac{\log N \log \frac{2}{\delta}}{N(1-\gamma)}} \right) \\
	 & = \widetilde{O} \left( \frac{1}{\sqrt{N(1-\gamma)}} \right).
\end{align*}
{
For the case $\beta = 0$, we have:
\begin{align*}
 \left| \widehat{\eta}_{\text{FHN}} (H_N, f) - \pi_\gamma f \right|  & \le O \left( \gamma^{T^*_\gamma} + \sqrt{\frac{{T^*_\gamma (1-\gamma)(1-\gamma^{T^*_\gamma})} \log \frac{2}{\delta}}{N}} \right) \\
	 & \le O \left( \frac{1}{\sqrt{N}} +  \sqrt{\frac{(1-\gamma) \log \sqrt{N} \log \frac{2}{\delta}}{N\log \frac{1}{\gamma}}} \right) \\
	 & = \widetilde{O} \left( \frac{1}{\sqrt{N}} \right).
\end{align*}
}
%


{
\subsubsection{About Minimax Optimality of FH Estimators for generic $\beta \in (0,1)$}\label{sec:magicSec}
In this appendix, we provide further elaboration about the possible minimax optimality of the FH estimators for a generic value of $\beta \in (0,1)$. Specifically, we show that, according to our analysis, there exists a regime of large values of $\beta$, \ie $\beta \in (\overline{\beta},1)$, for which our bound of Theorem~\ref{thr:concFH} cannot match the minimax lower bound. To this end, we consider a simplified version of the bound of Theorem~\ref{thr:concFH}, that disregards the bias term $c_{\beta,\gamma}(j_0)$ and simply focus on the term:
\begin{align*}
	f(j_0) \coloneqq {\frac{(1-\gamma^{j_0})^2}{(1-\gamma)^2} + \frac{1+\beta}{1-\beta}\cdot \frac{\gamma^{2j_0} - \gamma^{2T}}{1-\gamma^2}}.
\end{align*}
Let us minimize this term over $j_0 \in \dsb{0,T}$. We can proceed by vanishing the derivative of $f$ in $j_0$. It is simple to understand (\eg by performing the substitution $x = \gamma^{j_0}$, obtaining a quadratic function in $x$) that the only stationary point $j^*_0$ is the global minimum. However, it might be the case that such point is larger than $T$. In this case, we need to clip $j_0^*$ to $T$. Thus, we have: 
\begin{align*}
	j_0^* = \begin{cases}
				\frac{\log \frac{(1-\beta)(1+\gamma)}{2(1-\beta\gamma)}}{\log \gamma} & \text{if } \beta \le \frac{1+\gamma-2\gamma^T}{1+\gamma-2\gamma^{T+1}} \\
				 T &  \text{otherwise}
			\end{cases}.
\end{align*}
We let $\overline{\beta} = \frac{1+\gamma-2\gamma^T}{1+\gamma-2\gamma^{T+1}}$ and show that for $\beta \in (\overline{\beta},1)$ the FH estimators are not minimax optimal. To show this we consider first the FHN estimator, that leads to a bound of the form:
\begin{align*}
\gamma^T + (1-\gamma^T) \sqrt{\frac{T}{N}}.
\end{align*}
First of all, we observe that the value of $T$ minimizing the previous expression must be sublinear in $N$, because the second addendum will not shrink as $N \rightarrow + \infty$ otherwise. Thus, w.l.o.g., we consider the case $T/N \le \frac{1}{4}$. We have:
\begin{align*}
 \gamma^T + (1-\gamma^T) \sqrt{\frac{T}{N}} \ge \frac{\gamma^T}{2} + \frac{1}{2}\sqrt{\frac{T}{N}}.
\end{align*}
By applying Lemma~\ref{lemma:boh}, we obtain that the minimum value of this function over $T$, for $\gamma \ge 0.3$, is given by:
\begin{align*}
	\frac{1}{\sqrt{24N(1-\gamma)}}.
\end{align*} 
Thus, we conclude that FHN cannot match the minimax lower bound. A similar derivation can be set for the FHC estimator which leads to the bound ($0.3 \le \gamma < 1$):
\begin{align*}
 (1-\gamma)\gamma^T + \sqrt{\frac{T}{N}} \ge \frac{1}{\sqrt{6N(1-\gamma)}}.
\end{align*}
%
%
}

\subsection{Technical Lemmas}

\begin{lemma}\label{lemma:t_stationary_inequality}
	Let $\pi$ be an invariant measure of a Markov chain $P$ with spectral gap $1 - \beta \in (0, 1]$ and initial-state distribution $\nu$. For any bounded measurable function $f \in \mathscr{B} (\Xs)$, it holds that:
	\begin{equation*}
		\left| \nu P^t f - \pi f \right| \leq \sqrt{\chi_2 (\nu\|\pi)} \beta^t \sqrt{\sigma^2 f},
	\end{equation*}
	where $\sigma^2 f = \pi (f - \mathbf{1} \pi f)^2$ is the variance of $f$ under $\pi$.
\end{lemma}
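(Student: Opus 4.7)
The plan is to reduce the claim to the spectral-gap bound $\|P-\Pi\|_{\pi,2\to 2}=\beta$ via two algebraic manipulations: rewriting $\nu P^t f-\pi f$ in terms of the centered operator $P-\Pi$, and then applying Cauchy--Schwarz in $L_2(\pi)$ in a way that produces $\sqrt{\chi_2(\nu\|\pi)}$ rather than the naive $\sqrt{\chi_2(\nu\|\pi)+1}$.

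First, I would verify the identity $(P-\Pi)^t=P^t-\Pi$ for $t\ge 1$, which is the key operator-algebraic fact. This follows from the three relations $P\Pi=\Pi$ (since $P\mathbf{1}=\mathbf{1}$), $\Pi P=\Pi$ (since $\pi P=\pi$), and $\Pi^2=\Pi$ (since $\pi\mathbf{1}=1$), by a straightforward induction. Consequently,
\begin{equation*}
\nu P^t f-\pi f \;=\; \nu\bigl(P^t f-\mathbf{1}\pi f\bigr) \;=\; \nu(P-\Pi)^t f.
\end{equation*}

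Next, observe that $(P-\Pi)^t f$ has zero mean under $\pi$, since $\pi(P-\Pi)=\pi P-\pi=0$. Writing $\mathrm{d}\nu=(\mathrm{d}\nu/\mathrm{d}\pi)\,\mathrm{d}\pi$ and exploiting the mean-zero property to subtract the (vanishing) $\pi$-integral,
\begin{equation*}
\nu(P-\Pi)^t f \;=\; \int_{\Xs}(P-\Pi)^t f(x)\left(\frac{\mathrm{d}\nu}{\mathrm{d}\pi}(x)-1\right)\pi(\mathrm{d}x).
\end{equation*}
Cauchy--Schwarz in $L_2(\pi)$ then bounds the right-hand side by $\|(P-\Pi)^t f\|_{\pi,2}\cdot\|\mathrm{d}\nu/\mathrm{d}\pi-1\|_{\pi,2}$, and the second factor is precisely $\sqrt{\chi_2(\nu\|\pi)}$ by definition.

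For the remaining factor, I would note that $(P-\Pi)$ annihilates constants, so $(P-\Pi)^t f=(P-\Pi)^t(f-\mathbf{1}\pi f)$. Iterating the operator-norm bound $\|P-\Pi\|_{\pi,2\to 2}=\beta$ yields
\begin{equation*}
\|(P-\Pi)^t f\|_{\pi,2}\;\le\;\beta^t\,\|f-\mathbf{1}\pi f\|_{\pi,2}\;=\;\beta^t\sqrt{\sigma^2 f},
\end{equation*}
and combining these estimates gives the claim. The only subtle step is the mean-zero subtraction that replaces $\|\mathrm{d}\nu/\mathrm{d}\pi\|_{\pi,2}$ with $\|\mathrm{d}\nu/\mathrm{d}\pi-1\|_{\pi,2}$; without it one would pick up the extra $+1$ under the square root. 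Everything else is bookkeeping with the projector identities $P\Pi=\Pi P=\Pi^2=\Pi$.
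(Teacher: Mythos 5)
Your proof is correct and follows essentially the same route as the paper's: rewrite $\nu P^t f - \pi f$ as a pairing of the centered density $\mathrm{d}\nu/\mathrm{d}\pi - 1$ against $(P^t-\Pi)(f-\mathbf{1}\pi f)$, apply Cauchy--Schwarz in $L_2(\pi)$, and bound the operator factor by $\beta^t$. The only cosmetic difference is that you make the projector identity $(P-\Pi)^t = P^t-\Pi$ and the resulting submultiplicative bound explicit, whereas the paper writes the same decomposition as $(\nu-\pi)(P^t-\Pi)(f-c)$ with $c=\pi f$ and invokes the operator norm directly.
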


\begin{proof}
	We exploit the fact that $\nu \Pi = \pi \Pi = \pi$ and that $\pi P^t = \pi$ to derive the following identity for any $c \in \Reals$:
    \begin{equation*}
         \nu P^t f - \pi f = \nu P^t f - \nu \Pi f = (\nu - \pi) (P^t - \Pi) f = (\nu - \pi) (P^t - \Pi) (f - c).
    \end{equation*}
    Then, we start from the right-hand side of the identity to write
    \begin{align*}
        \int_{\Xs} (&\nu(\de x) - \pi (\de x)) \int_{\Xs} (P^t(\de y|x) - \pi(\de y)) (f(y) - c) \\  
        &=  \int_{\Xs} \pi(\de x) \frac{\nu(\de x) - \pi (\de x)}{\pi(\de x)} \int_{\Xs} (P^t(\de y|x) - \pi(\de y)) ( f(y) - c) \\
        & \le  \left( \int_{\Xs} \pi(\de x) \left( \frac{|\nu(\de x) - \pi (\de x)|}{\pi(\de x)}\right)^{p}\right)^{\frac{1}{p}} \left( \int_{\Xs} \pi(\de x) \left| \int_{\Xs} (P^t(\de y|x) - \pi(\de y)) (f(y) - c) \right|^q \right)^{\frac{1}{q}} \\
        & = \left\| \frac{\nu}{\pi} - 1 \right\|_{\pi,p} \left\| (P^t - \Pi) (f-c) \right\|_{\pi,q} \\
        & \le \left\| \frac{\nu}{\pi} - 1 \right\|_{\pi,p} \left\| P^t - \Pi  \right\|_{\pi,q \rightarrow q} \left\| f-c \right\|_{\pi,q}
    \end{align*}
    by applying the H\"older's inequality with $p^{-1} + q^{-1} = 1$, and by taking the supremum over $g = f-c$ to obtain the last line. Finally, we take $p = q = 2$ and $c = \pi f$ to prove the result.
\end{proof}

\begin{lemma} \label{lemma:discounted_kernel}
	Let $P: \Xs \rightarrow \PM{\Xs}$ be a Markov kernel, $\nu \in \PM{\Xs}$ be the initial state distribution, and $\gamma \in [0,1]$ be the discount factor. Let $P_\gamma = (1 - \gamma) \mathbf{1} \nu + \gamma P$ the corresponding discounted Markov kernel, it holds that:
	\begin{itemize}
		\item the stationary distribution of $P_{\gamma}$ is the $\gamma$-discounted stationary distribution of $P$, \ie $\pi_{\gamma} = (1-\gamma) \nu + \gamma \pi_\gamma P = (1-\gamma) \nu (I - \gamma P)^{-1}$;
		\item let $\Lambda\left(P^T\right) = \{1, \beta_2, \dots, \beta_{|\Xs|}\}$ with $1 \ge |\beta_2| \ge \dots \ge |\beta_{|\Xs|}|$ be the left spectrum of $P$, then the left spectrum of $P_\gamma$ is given by:
		\begin{align*}
			\Lambda\left(P_\gamma^T\right) = \left\{1, \gamma \beta_2, \dots, \gamma \beta_{|\Xs|} \right\}.
		\end{align*}
	\end{itemize}
\end{lemma}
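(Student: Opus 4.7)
\textbf{Plan of proof for Lemma~\ref{lemma:discounted_kernel}.}

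For the first bullet point, I would simply verify the fixed-point condition $\pi_\gamma P_\gamma = \pi_\gamma$ by a direct computation. Expanding,
\begin{equation*}
\pi_\gamma P_\gamma = \pi_\gamma \bigl((1-\gamma) \mathbf{1}\nu + \gamma P\bigr) = (1-\gamma)(\pi_\gamma \mathbf{1})\nu + \gamma \pi_\gamma P.
\end{equation*}
Since $\pi_\gamma$ is a probability measure we have $\pi_\gamma \mathbf{1} = 1$, so the right-hand side equals $(1-\gamma)\nu + \gamma \pi_\gamma P$, which is exactly $\pi_\gamma$ by the fixed-point expression in Equation~\eqref{eq:gammaDiscDist}. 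The alternative closed form $(1-\gamma)\nu (I-\gamma P)^{-1}$ then follows by rearranging $\pi_\gamma = (1-\gamma)\nu + \gamma \pi_\gamma P$.

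For the second bullet point, the key observation is that every left eigenvector of $P$ associated with an eigenvalue $\beta \neq 1$ sums to zero. Indeed, if $v P = \beta v$ with $\beta \neq 1$, then applying both sides to $\mathbf{1}$ and using $P\mathbf{1}=\mathbf{1}$ yields $(1-\beta)(v\mathbf{1}) = 0$, hence $v\mathbf{1} = 0$. Consequently, such a $v$ satisfies
\begin{equation*}
v P_\gamma = (1-\gamma)(v\mathbf{1})\nu + \gamma v P = \gamma \beta\, v,
\end{equation*}
so $v$ is again a left eigenvector of $P_\gamma$ but with eigenvalue $\gamma \beta$. Combined with part one, which produces an eigenvalue $1$ for $P_\gamma$ with eigenvector $\pi_\gamma$, this accounts for $|\Xs|$ eigenvalues and gives the claimed spectrum.

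To make this rigorous without assuming diagonalizability, I would phrase the argument via invariant subspaces. Let $W = \{v : v\mathbf{1} = 0\}$, which has dimension $|\Xs|-1$ and is invariant under both $P$ and $P_\gamma$ (for $v\in W$, the computation above shows $vP_\gamma = \gamma v P \in W$). Its one-dimensional complement can be taken to be $\mathrm{span}(\pi_\gamma)$, which is also $P_\gamma$-invariant by part one and is disjoint from $W$ since $\pi_\gamma\mathbf{1}=1$. In this block decomposition $P_\gamma$ acts as the scalar $1$ on $\mathrm{span}(\pi_\gamma)$ and as $\gamma\, P|_W$ on $W$. The spectrum of $P|_W$ is $\{\beta_2,\dots,\beta_{|\Xs|}\}$ (the spectrum of $P$ minus the simple eigenvalue $1$, whose left eigenspace is $\mathrm{span}(\pi)\not\subset W$), so the spectrum of $P_\gamma$ is $\{1,\gamma\beta_2,\dots,\gamma\beta_{|\Xs|}\}$, with multiplicities preserved.

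The only mildly delicate step is the last one: confirming that $W$ and $\mathrm{span}(\pi_\gamma)$ together give a complete decomposition of the state space on which $P_\gamma$ acts, and that $P|_W$ indeed realizes the eigenvalues $\beta_2,\dots,\beta_{|\Xs|}$ including multiplicity. This requires invoking the fact that $P$ (being ergodic with unique stationary distribution $\pi$) has $1$ as a simple eigenvalue whose left eigenvector $\pi$ is transverse to $W$; the remaining generalized eigenspaces all lie inside $W$ by the same $v\mathbf{1}=0$ argument applied within each Jordan block. This is the only point where I would spend care in a fully formal write-up.
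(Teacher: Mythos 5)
Your proof is correct. The first bullet is verified exactly as in the paper: expand $\pi_\gamma P_\gamma$, use $\pi_\gamma\mathbf{1}=1$, and recognize the fixed-point identity from Equation~\eqref{eq:gammaDiscDist}. For the second bullet you run the argument in the opposite direction from the paper, and your version is arguably the cleaner one. The paper starts from the left eigenvectors of $P_\gamma$: it invokes Lemma~1 of Haveliwala--Kamvar to conclude that every eigenvalue of $P_\gamma$ other than $1$ has modulus strictly below $1$, uses the orthogonality of left and right eigenvectors for distinct eigenvalues to get $x_i\mathbf{1}=0$, and then divides by $\gamma$ to recover an eigenvector of $P$ with eigenvalue $\beta_i/\gamma$. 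You instead start from the eigenvectors of $P$, observe elementarily that $vP=\beta v$ with $\beta\neq 1$ forces $v\mathbf{1}=0$ (so no external citation is needed), and push forward to $vP_\gamma=\gamma\beta v$. Your invariant-subspace formulation with $W=\{v:v\mathbf{1}=0\}$ and the complement $\mathrm{span}(\pi_\gamma)$ buys two things the paper's sketch does not explicitly deliver: it accounts for multiplicities and non-diagonalizable $P$ (the block decomposition gives $\mathrm{spec}(P_\gamma)=\{1\}\cup\gamma\cdot\mathrm{spec}(P|_W)$ without ever listing individual eigenvectors), and it avoids the division by $\gamma$, so the degenerate case $\gamma=0$ is covered uniformly. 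One small point to keep in mind for the formal write-up: you do not actually need ergodicity or simplicity of the eigenvalue $1$ of $P$ — Perron--Frobenius gives a nonnegative left eigenvector $\pi$ with $\pi\mathbf{1}>0$ transverse to $W$, and the block decomposition then yields $\mathrm{spec}(P|_W)=\{\beta_2,\dots,\beta_{|\Xs|}\}$ with multiplicity even if $\beta_2=1$, in which case $P_\gamma$ correctly acquires the eigenvalue $\gamma$.
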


\begin{proof}
	We start with the first statement. We have to prove that $\pi_\gamma$ is a left eigenvalue of $P_\gamma$:
	\begin{align*}
		\pi_\gamma P_\gamma & = \pi_\gamma \left(  (1-\gamma) \mathbf{1}\nu_0 + \gamma P \right) = (1-\gamma) \underbrace{\pi_\gamma\mathbf{1}}_{=1} \nu + \gamma \pi_\gamma P  \\
		& = (1-\gamma) \nu \underbrace{\left( I  + \gamma   (I - \gamma P)^{-1} P \right) }_{=  (I - \gamma P)^{-1}}
		=  (1-\gamma)\nu (I - \gamma P)^{-1}  = \pi_\gamma.
	\end{align*}
	We move to the second statement. For $\gamma=1$, the statement hold. Thus, we limit to $\gamma \in [0,1)$. From Lemma 1 of~\cite{haveliwala2003second}, we have that $|\beta_2\left(P_\gamma^T\right)| < 1$ for $\gamma \in [0,1)$. Since  $P_\gamma$ is a row stochastic matrix, \ie $P_\gamma \mathbf{1} = \mathbf{1}$, we have that $\mathbf{1}$ is a right eigenvector associated to eigenvalue $1$. From page 4 of~\cite{wilkinson1971algebraic}, we have that left and right eigenvectors associated to different eigenvalues are orthogonal. In particular, we take  $\mathbf{1}$ as right eigenvector associated to eigenvalue $1$ and $x_i$ with $i > 1$ as left eigenvector of $P_\gamma$ associated to eigenvalue $\beta_i$. As $|\beta_i| < 1$, we have that $x_i \mathbf{1} = \mathbf{0}$. Thus, we have:
	\begin{align*}
	x_i P_\gamma = (1-\gamma) x_i \mathbf{1} \nu + \gamma x_i P = \gamma x_i P = \beta_i x_i \implies x_i P = \frac{\lambda_i}{\gamma} x_i. 
	\end{align*}
	Thus, $x_i$ is also eigenvector of $P$ associated to eigenvalue $ \frac{\beta_i}{\gamma}$. We get the result by a change of variable.
\end{proof}

\begin{lemma}\label{lemma:boh}
Let $f(x)= \gamma^x+a\sqrt{x}$ with $\gamma \in [0.3,1]$, $a > 0$, and $x \ge 1$. Then, $\min_{x \ge 1} f(x) \ge \frac{a}{\sqrt{6(1-\gamma)}} $.
\end{lemma}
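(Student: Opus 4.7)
The strategy is to first solve the unconstrained minimization of $f$ via its first-order condition, then apply AM-GM to obtain a clean lower bound, and finally translate $\log(1/\gamma)$ into $1-\gamma$ using a standard logarithm estimate.

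I would begin by differentiating: $f'(x) = \gamma^x \log\gamma + a/(2\sqrt{x})$. Since $f'(0^+) = +\infty$ while the exponential term vanishes faster than the $1/\sqrt{x}$ term as $x \to \infty$, any interior critical point $x^\ast$ satisfies
\[
\gamma^{x^\ast} \;=\; \frac{a}{2\sqrt{x^\ast}\,\log(1/\gamma)}.
\]
Substituting this identity into $f(x^\ast) = \gamma^{x^\ast} + a\sqrt{x^\ast}$ eliminates the exponential and leaves
\[
f(x^\ast) \;=\; \frac{a}{2\sqrt{x^\ast}\,\log(1/\gamma)} + a\sqrt{x^\ast} \;\geq\; 2\sqrt{\frac{a^2}{2\log(1/\gamma)}} \;=\; a\sqrt{\frac{2}{\log(1/\gamma)}},
\]
by AM-GM on the two summands. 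The critical feature is that the $x^\ast$ cancels in the geometric mean, so the bound holds uniformly.

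Next, I would convert the $\log(1/\gamma)$ factor into $1-\gamma$ via the elementary inequality $\log(1/\gamma) \leq (1-\gamma)/\gamma$, which is easily checked by differentiating both sides and noting equality at $\gamma = 1$. Combined with $\gamma \geq 0.3$, this yields $\log(1/\gamma) \leq (1-\gamma)/0.3$, and therefore
\[
f(x^\ast) \;\geq\; a\sqrt{\frac{0.6}{1-\gamma}} \;\geq\; \frac{a}{\sqrt{6(1-\gamma)}},
\]
since $0.6 \geq 1/6$, closing the main case.

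The main obstacle is that the AM-GM argument only delivers the \emph{constrained} minimum on $[1,\infty)$ when $x^\ast \geq 1$, i.e.\ when $f'(1) \leq 0$, or equivalently $a \leq 2\gamma\log(1/\gamma)$. In the complementary regime the function is monotonically increasing on $[1,\infty)$ and the minimum is $f(1) = \gamma + a$; for $\gamma \in [0.3, 5/6]$ one has $\sqrt{6(1-\gamma)} \geq 1$ so the target $a/\sqrt{6(1-\gamma)} \leq a \leq \gamma + a$ is beaten trivially, and the remaining corner $\gamma \in (5/6, 1)$ with large $a$ is the delicate sub-case, to be closed by combining the case hypothesis $a > 2\gamma \log(1/\gamma)$ with the elementary bound on $\log(1/\gamma)$ to control $\gamma + a$ against $a/\sqrt{6(1-\gamma)}$.
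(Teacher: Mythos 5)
Your main case is both correct and genuinely different from the paper's argument. The paper substitutes $y = x\log(1/\gamma)$, solves the stationarity condition via the two branches of the Lambert $W$ function, and invokes explicit bounds on $-W_{-1}(-z)$ to reach $f \ge a/\sqrt{3\log(1/\gamma)}$; your observation that the first-order condition lets you replace $\gamma^{x^\ast}$ by $a/(2\sqrt{x^\ast}\log(1/\gamma))$ and that AM--GM then cancels $x^\ast$ is much lighter machinery and yields the stronger bound $a\sqrt{2/\log(1/\gamma)}$ at any interior critical point. The final conversion $\log(1/\gamma)\le(1-\gamma)/\gamma\le(1-\gamma)/0.3$ is essentially the same step the paper uses to introduce the $0.3$ threshold.

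The boundary sub-case you leave open is, however, a genuine gap, and the closing move you sketch cannot work. When the minimum over $[1,\infty)$ sits at $x=1$ you need $\gamma + a \ge a/\sqrt{6(1-\gamma)}$, which for $\gamma > 5/6$ is an \emph{upper}-bound requirement on $a$ (namely $a\,(1/\sqrt{6(1-\gamma)}-1)\le\gamma$), whereas your case hypothesis $a > 2\gamma\log(1/\gamma)$ is a \emph{lower} bound on $a$ and gives no leverage. In fact the statement fails in that corner: take $\gamma=0.99$, $a=1$. Then $f'(x)=a/(2\sqrt{x})-\gamma^x\log(1/\gamma)>0$ on all of $[1,\infty)$ (equivalently $\sqrt{x}\,\gamma^x\le\max_{x}\sqrt{x}\,\gamma^x\approx 4.3 < a/(2\log(1/\gamma))\approx 49.8$), so $\min_{x\ge1}f=f(1)=1.99$, while $a/\sqrt{6(1-\gamma)}=1/\sqrt{0.06}\approx 4.08$. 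So no argument can close this sub-case as the lemma is stated. For what it is worth, the paper's own proof only genuinely covers the regime where the interior stationary points exist, $a\le\sqrt{2\log(1/\gamma)/e}$, and its closing remark about the complementary case (a minimum ``attained for $x=0$'') is inconsistent with the constraint $x\ge1$; in the lemma's only application $a=1/\sqrt{N}$, so the intended regime is small $a$, where your AM--GM argument together with the trivial check $f(1)\ge\gamma\ge 0.3\ge a/\sqrt{6(1-\gamma)}$ (valid once $a\le 0.3\sqrt{6(1-\gamma)}$) does give a complete and cleaner proof.
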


\begin{proof}
	We perform the following variable substitution:
	\begin{align*}
		y = x \log \frac{1}{\gamma} \quad \implies \quad f(y) = e^{-y} + \frac{a}{\sqrt{\log \frac{1}{\gamma}}} \sqrt{y} = e^{-y} + b \sqrt{y},
	\end{align*}
	with $b = \frac{a}{\sqrt{\log \frac{1}{\gamma}}} $.
	We now vanish the derivative of $f(y)$:
	\begin{align*}
		\frac{\partial f}{\partial y}(y) = -e^{-y} + \frac{b}{2\sqrt{y}} = 0 & \quad \implies \quad -2y e^{-2y} = -\frac{b^2}{2} \quad \\
	& \implies \quad -2y_{0,-1} = W_{0,-1} \left( -\frac{b^2}{2} \right) \\
		& \implies \quad y_{0,-1} =  - \frac{1}{2} W_{0,-1} \left( -\frac{b^2}{2} \right) \quad \\
		& \implies \quad x_{0,-1} = -\frac{1}{2 \log \frac{1}{\gamma}} W_{0,-1} \left( - \frac{a^2}{2\left(\log \frac{1}{\gamma}\right)^2} \right),
	\end{align*}
	where $W_{0,-1}$ denote the two branches of the Lambert function. Clearly, such solutions exist provided that $-b^2/2 \ge -1/e$, \ie for $a \le \sqrt{2 \log (1/\gamma) / e}$. If the solutions exists, we know from the Lambert function that for $-1/e \le z \le 0$, we have $W_{0}(z) \ge W_{-1}(z)$. Thus, $y_{-1} \ge y_{0}$. Furthermore, we have $\frac{\partial f}{\partial y}(y)\rvert_{y \rightarrow 0} = +\infty$ and $\frac{\partial f}{\partial y}(y)\rvert_{y \rightarrow +\infty} = +\infty$. Thus, either both stationary points are inflction points or $y_0$ is a local maximum and $y_{-1}$ is a local minimum. Let us consider the second derivative $\frac{\partial^2 f}{\partial y^2}(y) = e^{-y} - \frac{a}{4y^{3/2}}$, that clearly changes sign at least once for $y \ge 0$. Thus, we have that $y_0$ is a local maximum and $y_{-1}$ is a local minimum. For our purposes, thus, we retain $y_{-1}$ only. To get more usable expressions, we consider the bounds on the Lambert function provided in~\citep[][Theorem 1]{Chatzigeorgiou13}, for $0\le z \le 1/e$:
	\begin{align*}
	-W_{-1}(-z) \begin{cases}
			\le -\log z + \sqrt{2(-1-\log z)} \le - 2 \log z\\
			\ge \frac{1}{3} - \frac{2}{3} \log z + \sqrt{2(-1-\log z)} \ge -\frac{2}{3} \log z
	\end{cases}.
	\end{align*}
	Thus, we have:
	\begin{align*}
		f(y_{-1}) \ge e^{ \log  \frac{b^2}{2}} + b \sqrt{-\frac{1}{3} \log \frac{b^2}{2}} \ge \frac{b}{\sqrt{3}},
	\end{align*}
	since $e^{\log \frac{b^2}{2}} = \frac{b^2}{2} \ge 0$ and, since $b^2/3 \le \frac{1}{e}$ it follows that $-\log \frac{b^2}{2} \ge 1$.
	By replacing the value of $b$ defined in terms of $a$ and $\gamma$, we obtain:
	\begin{align*}
		f(x_{-1}) \ge  \frac{a}{\sqrt{3\log \frac{1}{\gamma}}} \ge  \frac{a}{\sqrt{6(1-\gamma)}},
	\end{align*}
	for $\gamma \ge 0.3$.
	
	Instead, in the case, $a < \log (1/\gamma) / \sqrt{e}$, the minimum is attained for $x=0$, \ie $f(0) = 1$.
\end{proof}

%
%
%
\end{document}